\icmltitlerunning{Conformalization of Sparse Generalized Linear Models}
\begin{document}

\twocolumn[
\icmltitle{Conformalization of Sparse Generalized Linear Models}




\begin{icmlauthorlist}
\icmlauthor{Etash Kumar Guha}{schcs}
\icmlauthor{Eugene Ndiaye}{apple}
\icmlauthor{Xiaoming Huo}{schie}
\end{icmlauthorlist}

\icmlaffiliation{schcs}{College of Computing, Georgia Institute of Technology, Atlanta, GA, USA}
\icmlaffiliation{schie}{H. Milton Stewart School of Industrial and Systems Engineer- ing, Georgia Institute of Technology, Atlanta, GA, USA}
\icmlaffiliation{apple}{
Apple (Work partly done while at Georgia Tech)}
\icmlcorrespondingauthor{Etash Guha}{etash@gatech.edu}

\icmlkeywords{Conformal Prediction, Linear Models, Sparsity}

\vskip 0.3in
]



\printAffiliationsAndNotice{}  

\begin{abstract}
Given a sequence of observable variables $\{(x_1, y_1), \ldots, (x_n, y_n)\}$, the conformal prediction method estimates a confidence set for $y_{n+1}$ given $x_{n+1}$ that is valid for any finite sample size by merely assuming that the joint distribution of the data is permutation invariant. Although attractive, computing such a set is computationally infeasible in most regression problems. Indeed, in these cases, the unknown variable $y_{n+1}$ can take an infinite number of possible candidate values, and generating conformal sets requires retraining a predictive model for each candidate. In this paper, we focus on a sparse linear model with only a subset of variables for prediction and use numerical continuation techniques to approximate the solution path efficiently. The critical property we exploit is that the set of selected variables is invariant under a small perturbation of the input data. Therefore, it is sufficient to enumerate and refit the model only at the change points of the set of active features and smoothly interpolate the rest of the solution via a Predictor-Corrector mechanism. We show how our path-following algorithm accurately approximates conformal prediction sets and illustrate its performance using synthetic and real data examples.
\looseness=-1
\end{abstract}

\section{Introduction}

Modern statistical learning algorithms perform remarkably well in predicting an object based on its observed characteristics. In terms of AI safety, it is essential to quantify the uncertainty of their predictions. More precisely, after observing a finite sequence of data $\Data_n = \{(x_1, y_1), \ldots, (x_n, y_n)\}$, it is interesting to analyze to what extent one can build a confidence set for the next observation $y_{n+1}$ given $x_{n+1}$.

\looseness=-1
A classical approach is to adjust a prediction model $\mu_{\Data_n}$ on the observed data  $\Data_n$ and consider an interval centered around the prediction of $y_{n+1}$ when the fitted model receives $x_{n+1}$ as new input, \ie using $\mu_{\Data_n}(x_{n+1})$. We calibrate the confidence interval to satisfy a $100(1-\alpha)\%$ confidence by considering, for any level $\alpha$ in $(0, 1)$, the set
\begin{equation} \label{eq:naiveconfidence}
    \left\{z :\, |z - \mu_{\Data_n}(x_{n+1})| \leq Q_n(1-\alpha)\right\}\enspace,
\end{equation}
where $Q_n(1 - \alpha)$ is the $(1-\alpha)$-quantile of the empirical cumulative distribution function of the fitted residuals
$|y_i - \mu_{\Data_n}(x_{i})|$ for indices $i$ in $\{1, \ldots, n\}$. If the fitted model is close to the exact value, this method is approximately valid as $n$ goes to infinity.\looseness=-1

Alternatively, conformal prediction is a versatile and simple method introduced in \citep{Vovk_Gammerman_Shafer05, Shafer_Vovk08} that provides a finite sample and distribution free $100(1 - \alpha)\%$ confidence region for the predicted object based on past observations. The main idea is to follow the construction of the confidence set in \Cref{eq:naiveconfidence} by using candidate values for $y_{n+1}$. Since the true $y_{n+1}$ is not given in the observed dataset $\Data_n$, one can instead learn a predictive model $\mu_{\Data_{n+1}(z)}$ on an augmented database 
$$\Data_{n+1}(z) = \Data_n \cup (x_{n+1}, z) \enspace,$$ 
where a candidate $z$ replaces the unknown response $y_{n+1}$. We can, therefore,  define a prediction loss for each observation and rank them. A candidate $z$ will be considered conformal or typical if the rank of its loss is sufficiently small. The conformal prediction set will simply contain the most typical $z$ as a confidence set for $y_{n+1}$.
More formally, the conformal prediction set is obtained as
\begin{equation}\label{eq:conformal_confidence}
    \left\{z :\, |z - \mu_{\Data_{n+1}(z)}(x_{n+1})| \leq Q_{n+1}(1 - \alpha, z)\right\} \enspace,
\end{equation}
where $Q_{n+1}(1 - \alpha, z)$ is the $(1-\alpha)$-quantile of the empirical cumulative distribution function of the refitted residuals, e.g., $|y_i(z) - \mu_{\Data_{n+1}(z)}(x_{i})|$
for indices $i$ in $\{1, \ldots, n+1\}$ and $y(z)=(y_1, \ldots, y_n, z)$. This method benefits from a strong coverage guarantee without any assumption on the distribution, including finite sample size $n$; see \Cref{sec:Conformal_Prediction}.
The conformal prediction approach has been applied for designing uncertainty sets in active learning \citep{Ho_Wechsler08}, anomaly detection \citep{Laxhammar_Falkman15, Bates_Candes_Lei_Romano_Sesia21}, few-shot learning \citep{Fisch_Schuster_Jaakkola_Barzilay21}, time series \citep{Chernozhukov_Wuthrich_Zhu18, Xu_Xie20, Chernozhukov_Wuthrich_Zhu21, Lin_Trivedi_Sun22}, or to infer the performance guarantee for statistical learning algorithms \citep{Holland20, Cella_Martin20, Ndiaye22}. We refer to the extensive reviews in \citet{Balasubramanian_Ho_Vovk14} for other applications to artificial intelligence. 
Despite its attractive properties, the computation of conformal prediction sets traditionally requires fitting a model $\mu_{\Data_{n+1}(z)}$ for each possible augmented dataset $\Data_{n+1}(z)$ corresponding to each possible candidate $z$ for $y_{n+1}$. The number of possible candidates is infinite in a regression setting where an object can take an uncountable number of possible values. 
Therefore, the computation of conformal prediction is generally infeasible without additional structural assumptions on the underlying model fit. Otherwise, the calculation costs remain high or impossible. While many algorithms encounter this problem of fitting many models under alterations to the regularization parameter $\lambda$ \citep{Park_Hastie07}, to our knowledge, such algorithms do not exist for general loss functions under changes to the dataset without high computation cost. We can avoid the central issue of refitting the model many times by using the structural assumptions given by the setting of General Linear Models with $\ell_1$ regularization. \looseness=-1



\paragraph{Contributions} We generalize linear homotopy approaches from quadratic loss to a broader class of nonlinear loss functions using numerical continuation to efficiently trace a piecewise smooth solution path. Overall, we propose a homotopy drawing algorithm that efficiently keeps track of the weights over the space of possible candidates using the sparsity induced by the $\ell_1$ regularization. We develop an efficient Conformal Prediction algorithm for sparse generalized linear models from this homotopy algorithm. Additionally, using numerical continuation and the patterns in the sparsity of the weights, we relinquish the expensive necessity of retraining the model many times from random initialization. Furthermore, we provide a primal prediction step that significantly reduces the number of iterations needed to obtain an approximation at high precision. We illustrate the performance of our algorithm as a homotopy drawer and a conformal set generator using Quadratic, Asymmetric and Robust Loss functions with $\ell_1$ regularization.  \looseness=-1

\paragraph{Related Works}
 Our methodology uses numerical continuation (also called homotopy) to generate a path of solutions. Such continuation techniques have been previously used when the objective function is differentiable \citep{Allgower_Georg12}, \citep{Hastie_Rosset_Tibshirani_Zhu04} for support vector machine, \citep{Bach_Thibaux_Jordan04} for logistic regression, and more general loss functions regularized with the $\ell_1$ norm in \citep{Rosset_Zhu07, Park_Hastie07, Tibshirani13, Mairal_Yu12}. However, the latter focus on the regularization path and plot the solution curve as the regularization parameter $\lambda$ varies. To our knowledge, there does not exist work generating the solution curve as the label $z$ varies in $y(z)$ for general loss functions. In the setting we consider, we recall that it is the response vector that is parameterized as $y(z) = (y_1, \ldots, y_n, z)$ for a real value $z$; for which \citet{Garrigues_Ghaoui09} and \citet{Lei19} proposed a homotopy algorithm when the loss function is quadratic. However, such algorithms do not work for general nonlinear loss functions; our algorithm extends these works to such nonlinear loss functions. For such loss functions, works such as \citet{Ndiaye_Takeuchi19} aim to approximate the homotopy only enough to generate the conformal prediction set. However, this work suffers much worse as increasing accuracy is required when drawing the homotopy and cannot, for example, recover the path with quadratic loss, for which an exact homotopy algorithm is known.

\paragraph{Notation}For a nonzero integer $n$, we denote $[n]$ to be the set $\{1, \cdots, n\}$. Furthermore, the row-wise feature matrix is $X = [x_1, \cdots, x_{n+1}]^\top$ such that $X \in \mathbb{R}^{(n+1) \times p}$. We use the notation $X_A$ to refer to the sub-matrix of $X$ assembled from the columns with indices in $A$. If we need to do so for only one index $j$, where $j \in [p]$, we use $X_j$. For brevity, we will define $\sigma_{\text{max}}(X_A)$ as the maximum singular value of $X_A$, i.e. $\sigma_{\text{max}}(X_A) = \|X_A\|_2$. We also similarly define $\sigma_{\text{min}}(X_A)$. If a function $\beta(z)$ returns a vector for some input $z$, we can index that output vector by $\beta_A(z)$, where $A \subset [p]$ or $\beta_j(z)$ where $j \in [p]$. Moreover, given a function $f(x_i, x_j)$ of two variables, we denote the gradient of that function as $\partial f$. Furthermore, we use the simple notation $\partial_{i,j,k}f = \frac{\partial^3 f}{\partial x_i  \partial x_j\partial x_k}$ where $i,j,k \in [2]$.
We denote the smallest integer no less than a real value $r$ as $\lceil r \rceil$.
We denote by $Q_{n+1}(1 - \alpha)$, the $(1 - \alpha)$-quantile of a real valued sequence $(U_i)_{i \in [n + 1]}$, defined as the variable $Q_{n+1}(1 - \alpha) = U_{(\lceil (n+1)(1-\alpha) \rceil)}$, where $U_{(i)}$ are the $i$-th order statistics.
For $k$ in $[n+1]$, the rank of $U_k$ among $U_1, \cdots, U_{n+1}$ is defined as 
$\texttt{Rank}(U_k) = \sum_{i=1}^{n+1}\mathbb{1}_{U_i \leq U_k}$.


\section{Sparse Generalized Linear Models}

By definition of the conformal prediction set in \Cref{eq:conformal_confidence}, one needs to consider an augmented dataset $\Data_{n+1}(z)$ for any possible replacement of the target variable $y_{n+1}$ by a real value $z$. This implies the computation of the whole path $z \mapsto \mu_{\Data_{n+1}(z)}(x_{n+1})$ as well as the path of scores and quantiles.
However, it is generally difficult to achieve.
We focus on the Generalized Linear Model (GLM) regularized with an $\ell_1$ norm that promotes sparsity of the model parameter.
For a fixed $z \in \bbR$, the weight $\beta^\star(z)$ is defined as a solution to the following optimization problem
\begin{equation}\label{eq:opt_problem}
\beta^\star(z) \in \argmin_{\beta \in \mathbb{R}^p} f(y(z),X\beta) + \lambda \norm{\beta}_1 \enspace.
\end{equation}
where the data fitting term $f(y(z), y^{\star}(z))$ is a non negative loss function between a prediction $y^{\star}(z)$ and the augmented vector of labels $y(z) = (y_1, \cdots, y_n, z).$
We parameterize a linear prediction as 
$y_i^{\star} = x_{i}^{\top} \beta^{\star}(z)$
and the empirical loss is
\begin{equation*}
f(y(z), y^\star(z)) =
\sum_{i=1}^{n} \ell(y_i, y_i^{\star}(z)) + \ell(z, y^{\star}_{n+1}(z)) \enspace.
\end{equation*}
There are many examples of cost functions in the literature. A popular example is the power norm regression, where $\ell(a, b) = |a - b|^q$. When $q=2$, this corresponds to the classical linear regression. The cases where $q = [1, 2)$ are frequent in robust statistics, where the case $q = 1$ is known as the least absolute deviation. One can also consider the loss function \texttt{Linex} \citep{Gruber10, Chang_Hung07} which provides an \texttt{asymmetric} loss function $\ell(a, b) = \exp(\gamma(a - b)) - \gamma(a - b) - 1$, for $\gamma \neq 0$.\looseness=-1

\subsection{Assumptions and Properties}
We first describe the structure of the optimal solution $\beta^\star(z)$ for a candidate $z$. A solution to the optimization problem from \Cref{eq:opt_problem} must obey the first-order optimality condition. Analyzing the solution reveals a set of weights in $\beta^\star(z)$ whose value is $0$ and, thus, does not contribute to the inference. This is a crucial property of $\ell_1$ regularization.

\begin{restatable}[]{lem}{uniquesolution}\label{lem:uniqueness_solution}
A vector $\beta^\star(z) \in \mathbb{R}^p$ is optimal for \Cref{eq:opt_problem}  if and only if for $y^\star(z) = X\beta^\star(z)$, it holds
\begin{equation}\label{eq:fermat_equality}
    -X^{\top} \partial_2 f(y(z),y^\star(z)) = \lambda v(z) \enspace,
\end{equation}
where $v(z)$ belongs to the subdifferential of the $\ell_1$ norm at $\beta^\star(z)$ \ie $\forall j \in \{1, \ldots, p\}$, we have
\begin{align}\label{eq:subdifferential_l1norm}
v_j(z) &\in
\begin{cases} 
      \{\sign(\beta_j^{\star}(z))\} &\text{ if } \beta_j^{\star}(z) \neq 0 \enspace, \\
      [-1, 1] &\text{ if } \beta_j^{\star}(z) = 0 \enspace.
   \end{cases}
\end{align}
\end{restatable}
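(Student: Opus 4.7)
The plan is to apply Fermat's rule for convex composite optimization and then decompose the resulting subdifferential. Each of the losses considered in the paper (power-norm $|a-b|^q$ with $q\geq 1$, and the Linex loss) is convex and differentiable in its second argument, so the objective
$$F(\beta) := f(y(z), X\beta) + \lambda\|\beta\|_1$$
is convex in $\beta$. Hence $\beta^\star(z)$ is a minimizer if and only if $0 \in \partial F(\beta^\star(z))$, and the entire lemma is just an unpacking of this inclusion.

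Next I would unfold $\partial F$ using two standard results from convex analysis. By the sum rule (valid here since both summands are proper convex and finite everywhere),
$$\partial F(\beta^\star) = \partial [f(y(z), X\,\cdot)](\beta^\star) + \lambda\, \partial\|\beta^\star\|_1.$$
The chain rule for the pre-composition of the smooth map $f(y(z), \cdot)$ with the linear map $\beta\mapsto X\beta$ gives $\partial [f(y(z), X\,\cdot)](\beta^\star) = \{X^{\top} \partial_2 f(y(z), X\beta^\star)\}$. Combining and rearranging, $0 \in \partial F(\beta^\star)$ becomes
$$-X^{\top} \partial_2 f(y(z), y^\star(z)) \in \lambda\, \partial\|\beta^\star(z)\|_1,$$
which is exactly \Cref{eq:fermat_equality} for some $v(z) \in \partial\|\beta^\star(z)\|_1$.

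Finally, because the $\ell_1$ norm is separable, $\|\beta\|_1 = \sum_j |\beta_j|$, its subdifferential splits coordinatewise as $\partial\|\beta\|_1 = \prod_j \partial|\beta_j|$. The elementary computation $\partial|t| = \{\mathrm{sign}(t)\}$ for $t\neq 0$ and $\partial|0| = [-1, 1]$ then delivers the coordinatewise description in \Cref{eq:subdifferential_l1norm}.

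The only substantive point requiring care is checking that $f$ is smooth in its second variable, so that the chain rule produces a single gradient rather than a set-valued term; this is automatic for the examples listed. The two directions ("if" and "only if") are not separate steps: both fall out of Fermat's rule for convex functions. The lemma is essentially a dictionary translation of a KKT condition, so no genuine obstacle arises beyond invoking the correct calculus rules.
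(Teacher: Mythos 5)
Your proof takes the same route as the paper's: Fermat's rule on the convex composite objective yields the subdifferential inclusion, and separability of the $\ell_1$ norm gives the coordinatewise description in \Cref{eq:subdifferential_l1norm}. You are merely more explicit about the sum rule, the chain rule for the smooth-plus-linear composition, and the differentiability of $f$ in its second argument, all of which the paper uses implicitly.
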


Within this lemma, we wish to formally distinguish between nonzero weights and zero weights, as this helps determine the value of $v_j(z)$, per \Cref{eq:subdifferential_l1norm}.

\begin{restatable}[]{defi}{activeset}
We define our active set at a point $z$ as 
\begin{equation}\label{eq:active_set}
A(z) = \left\{j \in [p]:\; |X_{j}^{\top}\partial_2 f(y(z), y^\star(z))| = \lambda \right\}\enspace.
\end{equation}

The active set contains at least all the indices of the optimal solution that are guaranteed to be nonzero. We will denote $A=A(z)$ if there is no ambiguity.
\end{restatable}
%
The following result provides sufficient conditions to ensure \emph{uniqueness of the solution path}, \ie for any $z$, there exists a single optimal solution $\beta^\star(z)$ for Problem \ref{eq:opt_problem}.

\begin{restatable}[]{lem}{uniquepath}
\label{lem:unique_path}
For all $z$, we assume that the matrix $X_{A(z)}$ is full rank and that the loss function $f$ is strictly convex. With these two assumptions, for all candidates $z$, only one unique optimal solution $\beta^\star(z)$ exists. Thus, the solution path $z \mapsto \beta^\star(z)$ is well defined. 
\end{restatable}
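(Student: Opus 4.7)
The plan is to establish uniqueness through the classical strict-convexity argument, refined using the Fermat optimality condition from \Cref{lem:uniqueness_solution} together with the full-rank hypothesis on $X_{A(z)}$. Fix an arbitrary $z \in \mathbb{R}$ and suppose two optimizers $\beta_1$ and $\beta_2$ of the objective $F(\beta) = f(y(z), X\beta) + \lambda \|\beta\|_1$ exist. I would structure the argument in three stages: first reduce to the case $X\beta_1 = X\beta_2$, then show both solutions are supported on the active set $A(z)$, and finally conclude via linear independence of $X_{A(z)}$.

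For the first stage, since $F$ is convex (sum of a convex loss precomposed with a linear map and the convex $\ell_1$ norm), the midpoint $\bar\beta = \tfrac{1}{2}(\beta_1 + \beta_2)$ is also optimal. Applying strict convexity of $f$ in its second argument along the chord between $X\beta_1$ and $X\beta_2$ gives
\begin{equation*}
f(y(z), X\bar\beta) \le \tfrac{1}{2} f(y(z), X\beta_1) + \tfrac{1}{2} f(y(z), X\beta_2),
\end{equation*}
with strict inequality unless $X\beta_1 = X\beta_2$. Coupled with convexity of $\|\cdot\|_1$, any strict inequality would yield $F(\bar\beta) < F(\beta_1)$, contradicting optimality. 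Hence $X\beta_1 = X\beta_2 =: y^{\star}(z)$ and, by equating objectives, $\|\beta_1\|_1 = \|\beta_2\|_1$ as well.

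For the second stage, because $X\beta_1 = X\beta_2$, the residual gradient $\partial_2 f(y(z), y^{\star}(z))$ is the same for both solutions, so by \Cref{eq:fermat_equality} they share a common subgradient $v(z) = -\lambda^{-1} X^{\top} \partial_2 f(y(z), y^{\star}(z))$. In particular the set $A(z)$ from \Cref{eq:active_set} is well-defined independently of the choice of optimizer. For any index $j \notin A(z)$ we have $|v_j(z)| < 1$, which by \Cref{eq:subdifferential_l1norm} forces $\beta_{1,j} = \beta_{2,j} = 0$. Hence both $\beta_1$ and $\beta_2$ are supported on $A(z)$. In the final stage I use $X_{A(z)} \beta_{1,A(z)} = X\beta_1 = X\beta_2 = X_{A(z)} \beta_{2,A(z)}$, and invoke the full-rank assumption on $X_{A(z)}$ (so its columns are linearly independent) to obtain $\beta_{1,A(z)} = \beta_{2,A(z)}$, and hence $\beta_1 = \beta_2$.

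The main subtlety — and essentially the only non-routine point — is handling a potential circularity in the definition of $A(z)$: the active set is phrased in terms of $y^{\star}(z) = X\beta^{\star}(z)$, yet $\beta^{\star}(z)$ has not yet been shown to be unique. The first stage of the argument resolves this cleanly, because it shows that $X\beta^{\star}(z)$, and therefore $\partial_2 f(y(z), X\beta^{\star}(z))$, is the same for every optimizer, so $A(z)$ and $v(z)$ are well defined as functions of $z$ alone. Once that is clarified, the subsequent steps reduce to direct applications of the optimality condition and linear algebra.
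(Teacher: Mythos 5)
Your proof is correct and follows essentially the same route as the paper: use strict convexity of $f$ to establish uniqueness of the prediction $X\beta^{\star}(z)$, observe that all optimizers are supported on the active set $A(z)$, and then invoke the full-rank hypothesis on $X_{A(z)}$ to finish. The only stylistic difference is the closing step: the paper packages the last stage as strict convexity of the restricted problem $\min_{w} f(y(z), X_{A}w) + \lambda \|w\|_1$, whereas you do the linear algebra directly from $X_{A(z)}\beta_{1,A(z)} = X_{A(z)}\beta_{2,A(z)}$; these are interchangeable. Your writeup is arguably more careful in two spots the paper glosses over: you actually prove (rather than assert) that $X\beta^{\star}(z)$ is uniquely determined via the midpoint argument, and you explicitly address the apparent circularity in the definition of $A(z)$ by showing it depends only on the unique prediction $y^{\star}(z)$ and not on the choice of optimizer.
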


In the following, for simplicity of the presentation of the algorithms, we will add the classical qualification condition that the active set coincides with the support of the solution for any candidate $z$ where the path is differentiable.

\section{Efficient Computation of the Solution Path}

We aim to finely approximate the function $\beta^\star(z)$ as $\hat{\beta}(z)$ across all candidates $z$. The initial and main observation is that the active set map (resp. solution path) is piecewise constant (resp. smooth). That is to say, That is to say, the variable selected by the $\ell_1$ penalty is invariant with respect to small perturbation of the input data. Building on this, the path drawing algorithm is a combination of finding points where the active set changes occur and estimating the optimal solution, leveraging the regularity of the loss $f$. 

We have two situations for a change in the active set:
\begin{itemize}
\item A nonzero variable becomes zero \ie $\exists j \in A(z)$ s.t.
\begin{equation*}
     \beta_j^{\star}(z) \neq 0 \text{ and } \beta_j^{\star}(z_{j}^{ \rm{out}}) = 0 \enspace.
\end{equation*}
\item A zero variable becomes nonzero \ie $\exists j \in A^c(z)$ s.t.
\begin{equation*}
    |X_{j}^{\top}\partial_2 f(y(z_{j}^{\rm{in}}), y^\star (z_{j}^{\rm{in}}))| = \lambda \enspace.
\end{equation*}
\end{itemize}

Here, $z_j^{\text{out}}$ and $z_j^{\text{in}}$ are the estimated points where variable $j$ could leave or join the active set, respectively. With decreasing input $z$, the next change point occurs at
\begin{equation}\label{eq:change_point}
    z_{\rm{next}}(z) = \max\left(\max_{j \in A(z)} z_{j}^{\rm{out}},\; \max_{j \in A^c(z)} z_{j}^{ \rm{in}} \right) \enspace.
\end{equation}
Here, $z_{\rm{next}}(z)$ is the function that finds where the active set changes after point $z$. The set of change points are called \textit{kinks} of the path because they correspond to the non-differentiable points of the solution path
$z \mapsto \beta^\star(z)$. 
Core difficulties are that $f$ can be highly nonlinear, and the optimal weights $\beta^*(z^+)$ at an arbitrary point $z^+$ cannot be efficiently computed for many loss functions. To alleviate this, our algorithm sequentially creates a linearized version of $\beta_{A}^{\star}(z^+)$ called $\tilde{\beta}_A(z^+)$ (\Cref{sec:compbetaz}) in order to estimate the active set changes (\Cref{sec:activeupdate} and \Cref{sec:speedactivechange}). Given a point of active set change $z_t$, we can manually correct $\tilde{\beta}_A(z_t)$ into $\hat{\beta}_A(z_t)$ so that $\hat{\beta}_A(z_t) \approx \beta_A^\star(z_t)$ up to a negligible optimization error $\epsilon_{\text{tol}}$ using any appropriate solver (\Cref{sec:correction}).
It then approximates $\beta_{A^+}^{\star}(z^+)$, where $A^+$ is the new active set, repeating these steps until the stopping point is reached. We detail the entire pipeline in \Cref{alg:nonlinear_alg} and illustrate how our approximated solution path deviates from the exact one for different loss functions in \Cref{fig:figure1}. 
\looseness=-1

\subsection{Solution Estimation}
\label{sec:compbetaz}
We wish to approximate $\beta_{A}^{\star}(z^+)$ for a candidate $z^+$ smaller than the most recently found kink $z_t$ where $A(z^+) = A(z_t)$. To start, we will assume access to the corrected (up to negligible error) weights $\hat{\beta}_A(z_t)$ at the previous kink $z_t$. We can use a local linearization of the solution path as
\begin{equation}\label{eq:lin_inv_grad}
\tilde{\beta}_A(z^+) = \hat{\beta}_A(z_t) + \hat{\beta}_{A}^\prime (z_t)  \times  (z^+ - z_t) \enspace,
\end{equation}

where, $\hat{\beta}_{A}^\prime(z_t)$ is our approximation of the true slope $\frac{\partial\beta_{A}^{\star}}{\partial z}(z_t)$, which we do not have access to.  To understand this term, we follow \citet{Park_Hastie07} to define 
\begin{equation*}
    H(y(z), \beta_{A}^{\star}(z)) = X_A^{\top} \partial_2 f(y(z),y^\star(z)) + \lambda v_A \enspace,
\end{equation*}
From the Optimality Condition in \Cref{eq:fermat_equality}, it holds
$$H(y(z), \beta_{A}^{\star}(z)) = 0 \Longrightarrow \frac{\partial H}{\partial z} = 0 \enspace. $$
By the implicit function theorem and the chain rule, we have
\begin{align*}
    \frac{\partial \beta_{A}^{\star}}{\partial z} &= -\left(\frac{\partial H}{\partial \beta}\right)^{-1}\frac{\partial H}{\partial y}\frac{\partial y}{\partial z} \\
    \frac{\partial H}{\partial \beta} &= X_A^\top\partial_{2, 2} f(y(z), y^\star(z))X_A \\
   \frac{\partial H}{\partial y} &= X_A^\top\partial_{2, 1} f(y(z), y^\star(z))\\
   \frac{\partial y}{\partial z} &= (0, \ldots, 0, 1)^{\top} \enspace.
\end{align*}
To compute an approximation of $\frac{\partial\beta_{A}^{\star}}{\partial z}(z_t)$, we use a plug-in approach and only replace the (unknown) exact value of $y^\star(z_t) = X\beta^\star(z_t)$ with the approximate $\hat{y}(z_t) = X\hat \beta(z_t)$, yielding $\hat{\beta}_{A}^{\prime}(z_t)$. Notably, we get an equation for $\tilde{\beta}_A(z^+)$, which is efficient to compute given $y(z^+)$. As a reminder, the loss function $f$ differentiates this algorithm from existing path-finding algorithms tailored for changes in the hyperparameter $\lambda$. If $f$ is the Quadratic loss function, we recover the path-finding algorithm from \citet{Lei19}. A completely different homotopy will be generated if it is another loss function. \looseness=-1

\subsection{Active Set Updates}
\label{sec:activeupdate}
We have to track the changes that may occur in the active sets along the path sequentially depending on whether the variable leaves or enters the active set.
We will compute our path restricted in the interval $[z_{\min}, z_{\max}]$ where
$$z_{\min} = \min(y_1, \dots, y_n) \text{ and } z_{\max} = \max(y_1, \dots, y_n)\enspace.$$ For sufficiently large sample size $n$, any point $z$ outside this interval has a very low probability of being in the conformal set since it is an outlier of a label; see justification in \Cref{lem:justified}. For simplicity, we reiterate  that we know the corrected $\hat\beta(z_t)$ at the most recent kink $z_t$ approximating  $\beta^*(z_t)$ up to error $\epsilon_{\text{tol}}$ and the active set of weights $A(z_t)$. We estimate the kinks by following \Cref{eq:change_point} and replacing the exact solution $\beta^{\star}(z_t)$ by $\tilde{\beta}(z_t)$ in \Cref{eq:lin_inv_grad}. 

As such, we will iteratively set $z_{t+1} = z_{\rm{next}}(z_t)$ as the next change point following \Cref{eq:change_point}.

\paragraph{Leaving the active set} \label{para:join_active} At the point, where a nonzero variable becomes zero, we know that by \Cref{eq:lin_inv_grad}, we have a closed form approximation of $\beta_{A}^{\star}(z^+)$ given $\beta_A(z_t)$. Therefore, for a feature index $j \in A$, we have a closed-form approximation for $\beta_j^{\star}(z^+)$ in terms of $z^+$, which we can compute efficiently. Thus, from \Cref{eq:lin_inv_grad}, $j$ leaving the active set occurs at $\beta_j^{\star}(z^+) = 0$ implies a kink occurs at $z^+$ when $0 \approx \tilde{\beta}_j(z^+)$ defined in the R.H.S. of \Cref{eq:lin_inv_grad}; which is easily solvable in closed-form. Thus, for an active variable $j$ with nonvanishing gradient $\hat \beta_{j}^{\prime}(\hat z_t) \neq 0$, we define
$$z_{j, t+1}^{\rm{out}} = z_t -  \frac{\hat{\beta}_j(z_t)}{\hat \beta_{j}^{\prime}( z_t)} \enspace,$$ 
and define $z_{j, t+1}^{\rm{out}} = -\infty$ otherwise. We remind the reader that $\hat \beta_{j}^{\prime}(z_t)$ is our approximation of the true slope $ \frac{\partial \beta_{j}^{\star}}{\partial z}(z_t)$ from \Cref{sec:compbetaz}.

\paragraph{Joining the active set} \label{para:leave_active} At the point where a variable becomes nonzero, we know from \Cref{eq:fermat_equality} that for any inactive variable $j \in A^c$ that joins the active set
$$\lvert X_{j}^{\top} \partial_2 f(y(z^+), X_{A^+} \beta_{A^+}^{\star}(z^+))\rvert = \lambda$$
where $A^+ = A \cup \{j\}$. 
However, given that we are searching for a point $z^+$ where the active sets shift from $A$ to $A^+$, at point $z^+$, $\beta_{j}^{\star}(z^+)$ is roughly $0$ since it is the first point where $\beta_{j}^{\star}(z^+)$ becomes nonzero. Therefore, given this information, the prediction $X_j\beta_{j}^{\star}(z^+) = 0$ where $z^+$ is a kink. Using this idea, we can provide the equivalence
$$X_{A^+}\beta_{A^+}^{\star}(z^+) = X_{A}\beta_{A}^{\star}(z^+) = y^\star(z^+) \enspace.$$
 This equivalence is useful as we know how to approximate $\beta_{A}^{\star}(z^+)$, and therefore $y^\star(z^+)$,  efficiently from \Cref{eq:lin_inv_grad}. Therefore, the $j$-th variable must join the active set at approximately $z^+$ such that $\mathcal{I}_j(z^+) = 0$ where
\begin{equation}
    \label{eq:invariable}
   \mathcal{I}_j(z^+) = \lvert X_{j}^{\top}\partial_2 f(y(z^+), y^\star(z^+))\rvert - \lambda  \enspace. 
\end{equation}
We also leverage a plug-in estimate of \Cref{eq:invariable} by replacing $y^\star(\cdot)$ by $\hat y(\cdot)$.
We could use a root-finding function to efficiently find the roots of the function $\mathcal{I}_j(z^+)$ where the kink may lie. However, we seek a closed form as in \Cref{eq:lin_inv_grad} to make finding the roots of $\mathcal{I}_j(z^+)$ more efficient. We do this via linearization again. 

\subsection*{Approximation of $\partial_2 f(y(z^+), y^\star(z^+))$}
\label{sec:speedactivechange}
While $\tilde{\beta}_j(z^+)$ is linear in $z^+$, giving way to an explicit solution for $z^+$, this property does not hold for $\mathcal{I}_j(z^+)$ in \Cref{eq:invariable}. To achieve such a form, we need to linearize further $\partial_2 f(y(z^+), y^\star(z^+))$.
To simplify, we denote
$$f(y(z), y^\star(z)) = f\circ\zeta(z) \text{ where } \zeta(z) = (y(z), y^\star(z)) \enspace,$$ 
and approximate its gradient $\partial_2 f\circ \zeta(z^+)$ as 
%
\begin{equation}
\partial_2 f\circ \zeta(z) + \partial_{2,1} f\circ\zeta(z)^\top \Delta y + \partial_{2,2} f\circ\zeta(z)^\top \Delta y^\star \label{lin:secondgrad}
\end{equation}
where $\Delta y = y(z^+) - y(z)$ and
$\Delta y^\star = y^\star(z^+) - y^\star(z)$. We still have that \Cref{lin:secondgrad} can be nonlinear since $\Delta y^\star$ can be nonlinear in $z^+$. To alleviate this, we  leverage the local approximation of the solution path in \Cref{eq:lin_inv_grad} and the plug-in replacement of $\frac{\partial \beta_A^\star}{\partial z}$ with $\hat\beta_A^\prime$. As such, we can estimate the root of $\mathcal{I}_j(z^+)$ and sequentially define the next point where the $j$th variable becomes active. To simplify the expression, we set $\hat \zeta(z) = (y(z), \hat y(z))$ and
$$g(z_t) = [\partial_{2 1} f\circ \hat\zeta(z_t)]_{n+1} + \partial_{2,2}f\circ \hat \zeta(z_t)^\top X_A \hat\beta_A^\prime(z_t) \enspace.$$
A zero variable $j$ is estimated to become nonzero at
$$z_{j, t+1}^{\rm{in}} = z_t +  \frac{-X_{j}^{\top} \partial_2 f \circ \zeta (z_t) \pm \lambda}{ X_{j}^{\top} g(z_t) } \enspace,$$

The detailed computations are provided in \Cref{sec:detailspartf}. Note that when the denominator $g(z_t)$ is zero, we set $z_{j, t+1}^{\rm{in}} = -\infty$. Finally, the next kink is estimated as
$$ z_{t+1} = \max\left( \max_{j \in A(z_t)} z_{j, t+1}^{\rm{out}}, \max_{j \in A^c(z_t)} z_{j, t+1}^{\rm{in}} \right) \enspace.$$

\subsection{Solution Updates}
\label{sec:correction}
Our active set change point finder obtains the next kink $z_{t+1}$ by tracking all variables in the optimal solution to see whether or not it cancels out after $z_{t}$. However, our kink-finding tool requires exact knowledge of $\hat{\beta}(z_t)$, as in \Cref{eq:lin_inv_grad}. To find the next kink, we, therefore, need to know $\hat{\beta}(z_{t+1})$. To ensure that our linearized version $\tilde{\beta}_A(z_{t+1})$ is close enough to the exact solution $\beta_{A}^{\star}(z_{t+1})$, we manually correct our linearized weights $\tilde{\beta}_A(z_{t+1})$, creating our $\hat{\beta}_A(z_{t+1})$. We use the Predictor-Corrector strategy described below \citep{Allgower_Georg12}.  \looseness=-1

\paragraph{Predictor}

To initialize the solving process for $\hat{\beta}(z_{t+1})$, we first provide our linearized version $\tilde{\beta}(z_{t+1})$ from \Cref{eq:lin_inv_grad} as a warm start initialization. This vastly improves the computation time of our corrector step here after.

\paragraph{Corrector}

 The solution obtained in the warm start often has a reasonably small approximation error. For example, in the case of the Quadratic loss, this warm start is exact and correction is unnecessary. However, it generally is an imprecise estimate of the exact solution. To overcome this, we use an additional corrector step using an iterative solver, such as proximal gradient descent initialized with the predictor output, or more advanced solvers such as \texttt{CVXPY} \citep{Diamond_Boyd16} or \texttt{SKGLM} \citep{skglm}. This takes our linearized weight estimates of $\tilde{\beta}(z_{t+1})$ and outputs our approximate weights $\hat{\beta}(z_{t+1}) \approx \beta^*(z_{t+1})$ up to error $\epsilon_{\text{tol}}$ which is a hyperparameter for our corrector.

Finally, we can summarize our approximation of the homotopy as the following.
$$\hat{\beta}(z)= \begin{cases} \tilde{\beta}(z) \text{ if } z \notin \{z_1, \dots, z_t\}\\ \tilde{\beta}^\star(z)\text{ if } z \in \{z_1, \dots, z_t\} \text{ (output of corrector)}\\ \end{cases}$$
For point $z$ that is not a kink, we form our estiamte weights simply through the linearization. Otherwise, we can use the output of the corrector as our estimates. 

\begin{algorithm}[h]
\caption{Full Homotopy Generation}\label{alg:nonlinear_alg}
\begin{algorithmic}
\STATE \textbf{Input Data:} $\{(x_1, y_1), \ldots, (x_n, y_n)\}$, $x_{n+1}$, $\lambda > 0$
\STATE \textbf{Initialization: } $t = 0$
\STATE $z_0 = \max(y_1, \ldots, y_n) \quad y(z_0) = (y_1, \ldots, y_n, z_0)$
\STATE $\beta^\star(z_0) = \argmin_{\beta \in \mathbb{R}^p} f(y(z_0), X\beta) + \lambda \|\beta\|_1$
\STATE $A(z_0) = \left\{j \in [p]:\; |X_{j}^{\top}\partial_2 f(y(z_0),y^\star(z_0))| = \lambda\right\} $   
\WHILE{$z_t > \min(y_1, \ldots, y_n)$}
    \STATE $z_\mathcal{I} = \underset{j \in A^c(z_t)}{\max} z_{j, t+1}^{\rm{in}} \text{ and } j_\mathcal{I} = \underset{j \in A^c(z_t)}{\argmax} \; z_{j, t+1}^{\rm{in}}$
    \STATE $z_\mathcal{O} = \underset{j \in A(z_t)}{\max} z_{j, t+1}^{\rm{out}} \text{ and }j_\mathcal{O} = \underset{j \in A(z_t)}{\argmax} \; z_{j, t+1}^{\rm{out}}$
    \IF{$z_\mathcal{I} > z_\mathcal{O}$}
        \STATE $z_{t+1} = z_\mathcal{I}$
        \STATE $A(z_{t+1}) = A(z_t) \cup \{j_\mathcal{I}\}$
    \ELSE
        \STATE $z_{t+1} = z_\mathcal{O}$
        \STATE $A(z_{t+1}) = A(z_t) \setminus \{j_\mathcal{O}\}$
    \ENDIF
    \STATE \textbf{Predictor}
    $$\tilde{\beta}(z_{t+1}) = \hat{\beta}(z_t) + \hat{\beta}^{\prime}(z_t) \times (z_{t+1} - z_t)$$
    \STATE \textbf{Corrector} warm started with $\tilde{\beta}(z_{t+1})$
    \STATE $$\hat{\beta}(z_{t+1}) = \argmin_{\beta \in \mathbb{R}^p}f(y(z_{t+1}), X\beta) + \lambda \|\beta\|_1$$
    \STATE $t = t + 1$
\ENDWHILE\\
\STATE $\textbf{RETURN: } \hat{\beta}(z_t), z_t \text{ for all } t$
\end{algorithmic}
\end{algorithm}

\section{Conformal Prediction for Sparse GLM}
\label{sec:Conformal_Prediction}
Given a homotopy for specific data and loss function, computing the Conformal Prediction set relies on a simple calculation using the homotopy. Meanwhile, the primary tool for proving its validity is that the rank of one variable among an exchangeable and identically distributed sequence follows a (sub)-uniform distribution \citep{Brocker_Kantz11}.\looseness=-1

This idea of rank helps construct distribution-free confidence intervals. We can estimate the conformity of a given candidate $z$ by calculating its prediction loss $\lvert z - y^\star_{n+1}(z)\rvert$ and compute its rank relative to the losses of the other datapoints. The candidate will be considered conformal if the rank of its loss is sufficiently small.
Let us define the conformity measure for $\Data_{n+1}(z)$ as
\begin{align}\label{eq:arbitrary_conformity_measure}
E_{i}(z) &= |y_i - y_i^{\star}(z)|,\quad \forall i \in [n] \enspace,\\
E_{n+1}(z) &= |z - y_{n+1}^{\star}(z)| \enspace.
\end{align}
The main idea for constructing a conformal confidence set is to consider the conformity of a candidate point $z$ measured as\looseness=-1
\begin{equation}\label{eq:general_def_of_pi}
\pi(z) = 1 - \frac{1}{n+1} \texttt{Rank}(E_{n+1}(z)) \enspace.
\end{equation}
The conformal prediction set will collect the most conformal $z$ as a confidence set for $y_{n+1}$, \ie gathers all the real values $z$ such that $\pi(z) \geq \alpha$. This condition occurs if and only if the score $E_{n+1}(z)$ is ranked no higher than $\lceil(n+1)(1 - \alpha)\rceil$, among the sequence $\{E_{i}(z)\}_{i \in [n + 1]}$, \ie
\begin{align*}
 \left\{z \in \bbR:\, E_{n+1}(z) \leq Q_{n+1}(1 - \alpha, z)\right\} \enspace,
\end{align*}
which is exactly the conformal set defined in \Cref{eq:conformal_confidence}. We need to calculate the piecewise constant function $z \mapsto \pi(z)$ to compute a conformal set. Fortunately, our framework directly sheds light on the computation of this value over the range space.

Access to the homotopy, as well as the kinks, yields an efficient methodology for calculating the conformal prediction set over the range space. Once can readily use a root-finding approach \citep{Ndiaye_Takeuchi21} but it requires the assumption that the conformal set is an interval.
Instead, we do so by tracking where changes in this set occur. Naturally, changes in the rank function only occur when the error of one example surpasses or goes below that of the error of the last example. Formally, this can be seen when 
\begin{equation}
\label{eq:where_gamma_changes}
\lvert y_i(z) - y_{i}^{\star}(z)\rvert = \lvert y_{n+1}(z) -  y_{n+1}^{\star}(z))\rvert \enspace.
\end{equation}
We will look between the two kinks to efficiently find points satisfying \Cref{eq:where_gamma_changes}. For a point $z$ between two kinks, we can efficiently estimate $y^\star(z)$. Indeed, given a point $z$ is between two kinks $z_t$ and $z_{t+1}$ with an active set $A$, we can use \Cref{eq:lin_inv_grad} to estimate the quantity $y(z) - y^\star(z)$ as 
\begin{equation*}
    \mathcal{F}(z) = y(z) - \hat{y} (z_t) + X\hat{\beta}_A^{\prime}(z_t) \times (z - z_t) \enspace,
\end{equation*}
where $\hat{\beta}_A(z_t)$ is stored from the corrector step at the kink $z_t$. Given that this value is linear in $z$, we can form a closed-form explicit approximation for what $z$ solves \Cref{eq:where_gamma_changes}. Therefore, we can look for where the $\pi(z)$ value changes between every sequential pair of kinks. To find the conformal set, we track the changes $\pi(z)$ and recompute it along each root of \Cref{eq:where_gamma_changes}, yielding an efficient methodology to compute $\pi(z)$, and, therefore, the conformal set along the space of possible $y_{n+1}$ values.\looseness=-1

\begin{algorithm}[h]
\caption{Conformal Set Generation}\label{alg:nonlinear_alg}
\begin{algorithmic}
\STATE \textbf{Input Data:} $\{z_t, \hat{\beta}(z_t)\}_{t \in [0:T]}, \alpha \in (0, 1)$\\
\STATE \texttt{//Find where changes in $\pi$ occur}
\STATE Set $\mathcal{C} = \emptyset$
\FOR{$t \in [T], i \in [n]$}
    \IF{$\exists z^+ \text{ s.t. }  \mathcal{F}(z^+)_i = \mathcal{F}(z^+)_{n+1}$}
        \IF{$|\mathcal{F}(z_t)_{n+1}| \geq |\mathcal{F}(z_t)_{i}|$}
            \STATE $\mathcal{C} = \mathcal{C} \cup \{(z^+, -1)\}$
        \ELSIF{$|\mathcal{F}(z_t)_{n+1}| \leq |\mathcal{F}(z_t)_{i}|$}
            \STATE $\mathcal{C} = \mathcal{C} \cup \{(z^+, +1)\}$
        \ENDIF
    \ENDIF
\ENDFOR
\STATE \texttt{//Get $z$ s.t. $\texttt{Rank}(E_{n+1}(z))$ is small}
\STATE Set $\mathcal{E} = \emptyset$
\STATE $\mathcal{R} = \texttt{Rank}(E_{n+1}(z_0))$
\STATE \texttt{SORT}$(\mathcal{C}) \text{ according to first argument } z$
\FOR{$z, c \in \mathcal{C}$}
    \STATE $\mathcal{R} = \mathcal{R} + c$
    \STATE $\texttt{Rank}(E_{n+1}(z)) = \mathcal{R}$
    \IF{$\texttt{Rank}(E_{n+1}(z)) \leq \lceil (n+1)(1-\alpha) \rceil$}
        \STATE $\mathcal{E} = \mathcal{E} \cup \{z\}$
    \ENDIF
\ENDFOR
\STATE $\textbf{RETURN: } (\min(\mathcal{E}), \max(\mathcal{E}))$

\end{algorithmic}
\end{algorithm}

\section{Theoretical Analysis}
To understand where and how our algorithm fails, we provide an upper bound on the pointwise error of our algorithm. The error is mainly accumulated in the linearizations we use for estimating the solution and gradient of the loss. To form such bounds, we need assumptions on the regularity of the loss function $f$ itself and on the sequence of design matrix restricted on the active sets along the path. Namely, we will see that the derivatives of the loss function is bounded.\looseness=-1
\begin{restatable}[]{lem}{boundderivatives}
\label{lm:bound_derivatives}
The second derivatives, assumed to be continuous, of the loss function $f$ are locally bounded by data-dependent constants. Indeed, for any $z \in [z_{\min}, z_{\max}]$, we have $\beta^\star(z) \in \mathcal{B}_{\|\cdot\|_1}(0, R/\lambda)$ where $$R = \underset{z \in [z_{\min}, z_{\max}]}{\max} \, f(y(z), \mathbf{0}) \enspace.$$
By Weierstrass theorem, for any $i,j \in [2]$, we have 
$$\norm{\partial_{i,j} f \circ \zeta(z)}_{2} \leq \nu_f \enspace.$$
\end{restatable}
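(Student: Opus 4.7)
The plan is to establish the three pieces in sequence: the $\ell_1$ bound on $\beta^\star(z)$, the compactness of the range of $\zeta(z)$, and then the bound on the second derivatives via continuity.

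First I would prove the $\ell_1$ bound on the solution path by using optimality against the trivial candidate $\beta = \mathbf{0}$. Since $\beta^\star(z)$ minimizes the regularized objective, we have
\begin{equation*}
f(y(z), X\beta^\star(z)) + \lambda \|\beta^\star(z)\|_1 \;\leq\; f(y(z), X\mathbf{0}) + \lambda \|\mathbf{0}\|_1 \;=\; f(y(z), \mathbf{0}).
\end{equation*}
Using nonnegativity of $f$ on the left-hand side and dividing by $\lambda > 0$ gives $\|\beta^\star(z)\|_1 \leq f(y(z), \mathbf{0})/\lambda$. Taking the supremum over $z \in [z_{\min}, z_{\max}]$ and noting that $z \mapsto f(y(z), \mathbf{0})$ is continuous on a compact interval, so attains its maximum $R$ by Weierstrass, yields $\beta^\star(z) \in \mathcal{B}_{\|\cdot\|_1}(0, R/\lambda)$ as claimed.

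Next I would argue that $\zeta(z) = (y(z), y^\star(z))$ ranges over a compact subset of $\mathbb{R}^{2(n+1)}$ as $z$ varies over $[z_{\min}, z_{\max}]$. For the first block, $y(z) = (y_1, \ldots, y_n, z)$ is clearly contained in a compact set since only the last coordinate varies over a bounded interval. For the second block, $y^\star(z) = X\beta^\star(z)$ satisfies $\|y^\star(z)\|_\infty \leq \max_{j}\|X_j\|_\infty \cdot \|\beta^\star(z)\|_1 \leq \max_j\|X_j\|_\infty \cdot R/\lambda$ by H\"older's inequality, so it also lies in a fixed compact box. Hence the image of $z \mapsto \zeta(z)$ is contained in a compact set $K \subset \mathbb{R}^{2(n+1)}$ that depends only on the data $X$, the observed labels, and $\lambda$.

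Finally, since each second partial $\partial_{i,j} f$ is assumed continuous on $\mathbb{R}^{n+1} \times \mathbb{R}^{n+1}$, its restriction to $K$ is continuous on a compact set, so by the Weierstrass extreme value theorem the Euclidean operator norm $\|\partial_{i,j} f\circ\zeta(z)\|_2$ attains a finite maximum; taking $\nu_f$ to be the largest of these four maxima over $i,j \in \{1,2\}$ yields the uniform bound. The only mild subtlety is making sure that $K$ does not depend on $z$ itself—which is handled by using the uniform $R$ obtained through the first compactness argument rather than a pointwise bound.
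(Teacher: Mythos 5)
Your proof is correct and follows essentially the same route as the paper: bound $\|\beta^\star(z)\|_1$ by comparing the objective at $\beta^\star(z)$ against $\mathbf{0}$, deduce that $\zeta(z)$ ranges over a fixed compact set, and invoke Weierstrass on the continuous second partials. The only cosmetic difference is that you bound $y^\star(z)$ via H\"older's inequality in $\ell_\infty$, whereas the paper uses $\|y^\star(z)\|_2 \le \sigma_{\max}(X_A)\,R/\lambda$; both yield the same compactness conclusion.
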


\begin{restatable}[]{lem}{strongconvexityalongpath}
\label{lm:strong_convexity_along_path}
We assume that the loss $f$ is $\mu_f$-strongly convex \ie $ \mu_f \coloneqq \inf_{\norm{\zeta} \leq B} \norm{\partial_{2, 2} f \circ \zeta(z)} > 0$, where $B$ is provided in the appendix.
Thus, for any $z \in [z_{\min}, z_{\max}]$, the maximum singular value of the inverse of the matrix $\frac{ \partial H}{\partial \beta} = X_A^\top\partial_{2, 2} f\circ \zeta(z) X_A$ is upper bounded as 
$$\norm{\frac{ \partial H}{\partial \beta}^{-1}}_{2} \leq \frac{1}{\sigma_{\min}^{2}(X_A) \times \mu_f} \enspace.$$ 
\end{restatable}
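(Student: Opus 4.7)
The plan is to prove the spectral bound by a standard Rayleigh quotient argument, combining the strong convexity of $f$ with the full-rank assumption on $X_A$ from \Cref{lem:unique_path}. The key observation is that $\frac{\partial H}{\partial \beta} = X_A^\top \partial_{2,2} f \circ \zeta(z) X_A$ is a symmetric positive definite matrix, so its spectral norm equals the reciprocal of its smallest eigenvalue, and we only need a uniform lower bound on $\lambda_{\min}\bigl(\frac{\partial H}{\partial \beta}\bigr)$.

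First I would verify that the assumption $\|\zeta(z)\| \leq B$ under which $\mu_f$ is defined is satisfied for every $z \in [z_{\min}, z_{\max}]$, so that the strong convexity bound $\partial_{2,2} f \circ \zeta(z) \succeq \mu_f I$ applies uniformly along the path. This follows from \Cref{lm:bound_derivatives}, which confines $\beta^\star(z)$ to the $\ell_1$ ball of radius $R/\lambda$; hence $y^\star(z) = X \beta^\star(z)$ has norm at most $\|X\|_{2,\infty} R / \lambda$, while $y(z)$ is bounded by the data and the range $[z_{\min}, z_{\max}]$. One then chooses $B$ in the appendix to majorize both components, which is precisely what the lemma states.

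Once uniform strong convexity of the Hessian of $f$ is available, the main inequality is a one-line Rayleigh computation. For any unit vector $u \in \bbR^{|A|}$,
\begin{equation*}
u^\top X_A^\top \partial_{2,2} f \circ \zeta(z)\, X_A u = (X_A u)^\top \partial_{2,2} f \circ \zeta(z)\, (X_A u) \geq \mu_f \|X_A u\|_2^2 \geq \mu_f\, \sigma_{\min}^2(X_A),
\end{equation*}
where the first inequality uses the strong convexity of $f$ on the relevant region and the second uses $\|X_A u\|_2 \geq \sigma_{\min}(X_A) \|u\|_2 = \sigma_{\min}(X_A)$, valid because $X_A$ is full column rank by \Cref{lem:unique_path}. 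Therefore $\frac{\partial H}{\partial \beta} \succeq \mu_f \sigma_{\min}^2(X_A)\, I$ in the Loewner order, and inverting a symmetric PD matrix yields $\bigl\|\bigl(\frac{\partial H}{\partial \beta}\bigr)^{-1}\bigr\|_2 = 1/\lambda_{\min}\bigl(\frac{\partial H}{\partial \beta}\bigr) \leq 1/\bigl(\mu_f\, \sigma_{\min}^2(X_A)\bigr)$, which is the claimed bound.

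The only real obstacle is the bookkeeping of the radius $B$: one must be careful that the uniform lower bound $\mu_f > 0$ is not vacuous and that the path never exits the ball where strong convexity is quantified. Since the lemma statement already absorbs this into a data-dependent constant deferred to the appendix, the remainder of the argument is a textbook Rayleigh quotient manipulation and requires no further delicate estimate.
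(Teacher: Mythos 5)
Your proof is correct and matches the approach the paper takes: a Rayleigh-quotient bound combining the strong-convexity lower bound $\partial_{2,2} f\circ\zeta(z)\succeq \mu_f I$ (valid because \Cref{lem:compact} keeps $\zeta(z)$ inside the ball of radius $B$) with $\|X_A u\|_2\geq\sigma_{\min}(X_A)\|u\|_2$ from the full-rank assumption, then inverting a symmetric PD matrix. One small remark: the paper embeds this calculation inside the proof of \Cref{lm:bound_gradient} and writes the intermediate bound with $\|\tfrac{\partial H}{\partial\beta}\|_2$, whereas the quantity being lower-bounded is really $\sigma_{\min}(\tfrac{\partial H}{\partial\beta})$; your explicit Loewner-order formulation $\tfrac{\partial H}{\partial\beta}\succeq\mu_f\sigma_{\min}^2(X_A)I$ states the step more precisely.
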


With these two lemmas, we can form our error bounds. 
\begin{restatable}[]{thm}{wholeerror}
\label{thm:wholeerror}
 The error between our linearized weights $\tilde{\beta}(z^{+})$ and the true weights $\beta^\star(z^{+})$ is upper bounded by 
 $$
\norm{\tilde{\beta}(z^{+}) - \beta^\star(z^{+})}_2 \leq \epsilon_{\rm{tol}} + \frac{L \nu_f}{\mu_f} \times |z^+ - z_t| \enspace.
$$
$$\text{where } L = \frac{\sigma_{\max}(X_{A(z_t)})}{\sigma_{\min}^2(X_{A(z_t)})} + 
\sup_{z \in [z^+, z_t]} \frac{\sigma_{\max}(X_{A(z)})}{\sigma_{\min}^2(X_{A(z)})} \text{,}$$
and $z_t$ is the prior kink of $z^+$.
\end{restatable}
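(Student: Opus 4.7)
My plan is to bound the difference via the triangle inequality by splitting it into a corrector error at $z_t$ and a linearization error along the segment $[z^+,z_t]$. Specifically, write
\begin{equation*}
\tilde\beta(z^+) - \beta^\star(z^+) = \bigl[\hat\beta(z_t) - \beta^\star(z_t)\bigr] + \Bigl[\hat\beta'(z_t)(z^+-z_t) - \bigl(\beta^\star(z^+)-\beta^\star(z_t)\bigr)\Bigr].
\end{equation*}
The first bracket has norm at most $\epsilon_{\rm tol}$ by definition of the corrector tolerance. So it remains to absorb the second bracket into $\tfrac{L\nu_f}{\mu_f}|z^+-z_t|$.

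For the second bracket, I would apply the fundamental theorem of calculus to rewrite $\beta^\star(z^+) - \beta^\star(z_t) = \int_{z_t}^{z^+} \tfrac{\partial \beta^\star}{\partial z}(s)\,ds$, which is valid between two consecutive kinks since Lemma on uniqueness of the path plus the implicit function theorem give a smooth branch on that interval. Then a triangle inequality in integral form yields
\begin{equation*}
\Bigl\|\hat\beta'(z_t)(z^+-z_t) - \!\!\int_{z_t}^{z^+}\!\!\tfrac{\partial \beta^\star}{\partial z}(s)\,ds\Bigr\|_2 \leq \|\hat\beta'(z_t)\|_2\,|z^+-z_t| + \sup_{s\in[z^+,z_t]}\!\bigl\|\tfrac{\partial \beta^\star}{\partial z}(s)\bigr\|_2\,|z^+-z_t|.
\end{equation*}

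Each of the two norms is then bounded using the closed-form expression from \Cref{sec:compbetaz}: $\tfrac{\partial \beta^\star_A}{\partial z}(s) = -\bigl(\tfrac{\partial H}{\partial\beta}\bigr)^{-1}\tfrac{\partial H}{\partial y}\tfrac{\partial y}{\partial z}$, and $\hat\beta'_A(z_t)$ has the identical structure with $\hat y$ plugged in for $y^\star$. Submultiplicativity of the operator norm gives
\begin{equation*}
\bigl\|\tfrac{\partial \beta^\star_A}{\partial z}(s)\bigr\|_2 \leq \bigl\|(\tfrac{\partial H}{\partial\beta})^{-1}\bigr\|_2 \cdot \sigma_{\max}(X_{A(s)}) \cdot \bigl\|\partial_{2,1}f\circ\zeta(s)\bigr\|_2 \cdot 1,
\end{equation*}
and the same for the plug-in variant $\hat\beta'(z_t)$ with $\zeta$ replaced by $\hat\zeta$. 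By \Cref{lm:bound_derivatives}, the mixed second derivative is bounded by $\nu_f$ uniformly (this is the reason the lemma is stated in terms of the uniform bound $\nu_f$); by \Cref{lm:strong_convexity_along_path}, the inverse Hessian term is bounded by $1/(\sigma_{\min}^2(X_{A})\mu_f)$. Combining these gives $\|\hat\beta'(z_t)\|_2 \leq \tfrac{\sigma_{\max}(X_{A(z_t)})}{\sigma_{\min}^2(X_{A(z_t)})}\cdot \tfrac{\nu_f}{\mu_f}$ and analogously $\sup_{s} \|\tfrac{\partial \beta^\star}{\partial z}(s)\|_2 \leq \sup_{s}\tfrac{\sigma_{\max}(X_{A(s)})}{\sigma_{\min}^2(X_{A(s)})}\cdot\tfrac{\nu_f}{\mu_f}$, whose sum is exactly $L\cdot\tfrac{\nu_f}{\mu_f}$.

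The main technical subtlety is ensuring the uniform norm bounds on $\partial_{2,1}f$, $\partial_{2,2}f$ and the strong convexity $\mu_f$ apply at both the exact argument $\zeta(s)$ and the plug-in argument $\hat\zeta(z_t)$. This is handled by the fact that \Cref{lm:bound_derivatives} restricts $\beta^\star(z)$ to the $\ell_1$-ball of radius $R/\lambda$ (and $\hat\beta$ lies in a similar ball up to $\epsilon_{\rm tol}$), so both $\zeta(s)$ and $\hat\zeta(z_t)$ lie in the compact set over which $\nu_f$ and $\mu_f$ are defined. One also needs to implicitly assume that the active set is constant along $[z^+, z_t]$, which holds by definition of $z_t$ as the previous kink before $z^+$. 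Chaining the two bounds yields the claimed inequality.
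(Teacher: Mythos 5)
Your proof is correct and follows essentially the same route as the paper's: the identical decomposition into corrector error $\hat\beta(z_t)-\beta^\star(z_t)$ plus an integral-form linearization remainder, the same triangle inequality splitting $\|\hat\beta'(z_t)\| + \sup_s\|\partial\beta^\star/\partial z(s)\|$, and the same appeal to the implicit-function-theorem formula combined with \Cref{lm:bound_derivatives} and \Cref{lm:strong_convexity_along_path}. The only cosmetic difference is that the paper factors the gradient bound out into a separate lemma (\Cref{lm:bound_gradient}) that bounds $\max(\|\partial\beta^\star/\partial z\|,\|\hat\beta'\|)$ once and then cites it, whereas you re-derive that bound inline via submultiplicativity; your observation about why $\nu_f$ and $\mu_f$ apply to both $\zeta$ and $\hat\zeta$ (both live in the compact ball of \Cref{lem:compact}) matches the remark the paper makes right after that lemma.
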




\begin{restatable}[]{thm}{uppboundg}
\label{thm:uppboundg}
The estimation error is upper bounded by $$\norm{\partial_2 f \circ \zeta(z^+) - \partial_2 f\circ \hat{\zeta}(z^+)}_2  \leq K \left[  \epsilon_{\rm{tol}} + \frac{L\nu_f}{\mu_f}|z^+ - z_t|\right]$$
where $K=\nu_f \times \sigma_{\max}(X_A)$.
\end{restatable}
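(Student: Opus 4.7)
The plan is to observe that $\zeta(z^+) = (y(z^+), y^\star(z^+))$ and $\hat\zeta(z^+) = (y(z^+), \hat y(z^+))$ agree in their first argument, so the whole difference on the left-hand side comes from perturbing only the second argument of $\partial_2 f$. I would therefore rewrite
\[
\partial_2 f\circ\zeta(z^+) - \partial_2 f\circ\hat\zeta(z^+)
= \partial_2 f\bigl(y(z^+), y^\star(z^+)\bigr) - \partial_2 f\bigl(y(z^+), \hat y(z^+)\bigr),
\]
and invoke the fundamental theorem of calculus along the segment joining $\hat y(z^+)$ to $y^\star(z^+)$. Along this segment the Jacobian of $u \mapsto \partial_2 f(y(z^+), u)$ is exactly $\partial_{2,2} f$, which by \Cref{lm:bound_derivatives} is bounded in operator norm by $\nu_f$. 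This yields the Lipschitz estimate
\[
\bigl\|\partial_2 f\circ\zeta(z^+) - \partial_2 f\circ\hat\zeta(z^+)\bigr\|_2 \leq \nu_f \, \bigl\|y^\star(z^+) - \hat y(z^+)\bigr\|_2.
\]

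Next I would translate the bound on predictions into a bound on parameters. Since both $\beta^\star(z^+)$ and $\tilde\beta(z^+)$ are supported on the active set $A$ on the current smooth piece (the predictor in \Cref{eq:lin_inv_grad} only updates the active coordinates), we may write
\[
y^\star(z^+) - \hat y(z^+) = X_A\bigl(\beta^\star_A(z^+) - \tilde\beta_A(z^+)\bigr),
\]
so that $\|y^\star(z^+) - \hat y(z^+)\|_2 \leq \sigma_{\max}(X_A)\,\|\beta^\star(z^+) - \tilde\beta(z^+)\|_2$. Plugging in \Cref{thm:wholeerror} gives
\[
\bigl\|y^\star(z^+) - \hat y(z^+)\bigr\|_2 \leq \sigma_{\max}(X_A) \Bigl[\epsilon_{\mathrm{tol}} + \tfrac{L\nu_f}{\mu_f}\lvert z^+ - z_t\rvert\Bigr],
\]
and combining with the Lipschitz estimate above recovers exactly the claimed bound with $K = \nu_f \sigma_{\max}(X_A)$.

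The only subtle point, and where I expect the mild friction to lie, is making the first step fully rigorous: one must argue that $\partial_{2,2}f$ is the right Jacobian to use (this requires the continuity assumption on second derivatives in \Cref{lm:bound_derivatives}) and that the segment $\{(y(z^+), (1-t)\hat y(z^+) + t y^\star(z^+)) : t \in [0,1]\}$ stays inside the region where the $\nu_f$-bound applies. Both points follow from the setup of \Cref{lm:bound_derivatives} since the bound is global on the relevant compact range, so no additional hypothesis is needed; everything else reduces to the standard estimates $\|Av\|_2 \leq \sigma_{\max}(A)\|v\|_2$ and an appeal to \Cref{thm:wholeerror}.
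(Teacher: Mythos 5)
Your proposal is correct and follows essentially the same argument as the paper. The paper defines $\phi(t) = \partial_2 f(\hat\zeta(z^+) + t(\zeta(z^+) - \hat\zeta(z^+)))$ and applies the fundamental theorem of calculus, then observes that $[\zeta(z^+) - \hat\zeta(z^+)]_1 = 0$ so only the $\partial_{2,2}f$ term survives; you simply make this observation up front by restricting to the second argument, which is the same calculation. The remaining steps---the $\nu_f$ bound from \Cref{lm:bound_derivatives}, the factor $\sigma_{\max}(X_A)$ to pass from predictions to parameters, and the appeal to \Cref{thm:wholeerror}---match the paper exactly.
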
 


\section{Numerical Experiments}
Our central claim is twofold. Our method efficiently and accurately generates the homotopy over general loss functions. Our method also efficiently and accurately generates conformal sets over general loss functions.
We demonstrate these two claims over different datasets and loss functions.
For reproducibility, our implementation is at \newline \url{github.com/EtashGuha/sparse_conformal}.

\paragraph{Datasets} \label{para:datasets} We use four datasets to illustrate the performance of our algorithm. The first three are real datasets sourced from \citep{scikit-learn}. The Diabetes dataset is a  regression dataset with $20$ features and $442$ samples. Additionally, we use the well-known regression dataset from \citep{Friedman91} denoted as Friedman1, which has $10$ features and $100$ samples. We also use the multivariate dataset denoted Friedman2 from \citep{breiman1996bagging}, which has $100$ samples and $4$ features. These datasets are used to demonstrate the capabilities of our algorithm on real datasets. We also generate regression problems synthetically. We sample the data and labels from a uniform distribution between $[-1,1]$. We also divide by the standard deviation to normalize the dataset. We generate two different synthetic datasets, one normal-sized dataset, denoted \texttt{synthetic} with $100$ samples and $100$ features, and a larger dataset, denoted \texttt{large} with $1000$ features and $20$ samples. This larger dataset is intended to display our algorithm's complexity in terms of the number of features. These datasets represent a reasonable range of regression problems usable for our experiments. \looseness=-1

\paragraph{Baselines}
To form a baseline for our algorithm, we use several baselines. This baseline is the most naive conformal prediction algorithm. For Grid algorithms, the algorithm selects $100$ potential candidates evenly across the range of possible candidates. It uses the primal corrector at each point to calculate the weights to form the homotopy. A more sophisticated conformal prediction and homotopy generating algorithm is the Approximate homotopy from \citep{Ndiaye_Takeuchi19}, which leverages loss function smoothness to track violations (up to a prescribed error tolerance) of the optimality condition along the path.\looseness=-1
\begin{figure*}[!ht]
        \centering
        \begin{subfigure}[b]{0.240\textwidth}
            \centering
            \includegraphics[width=\textwidth]{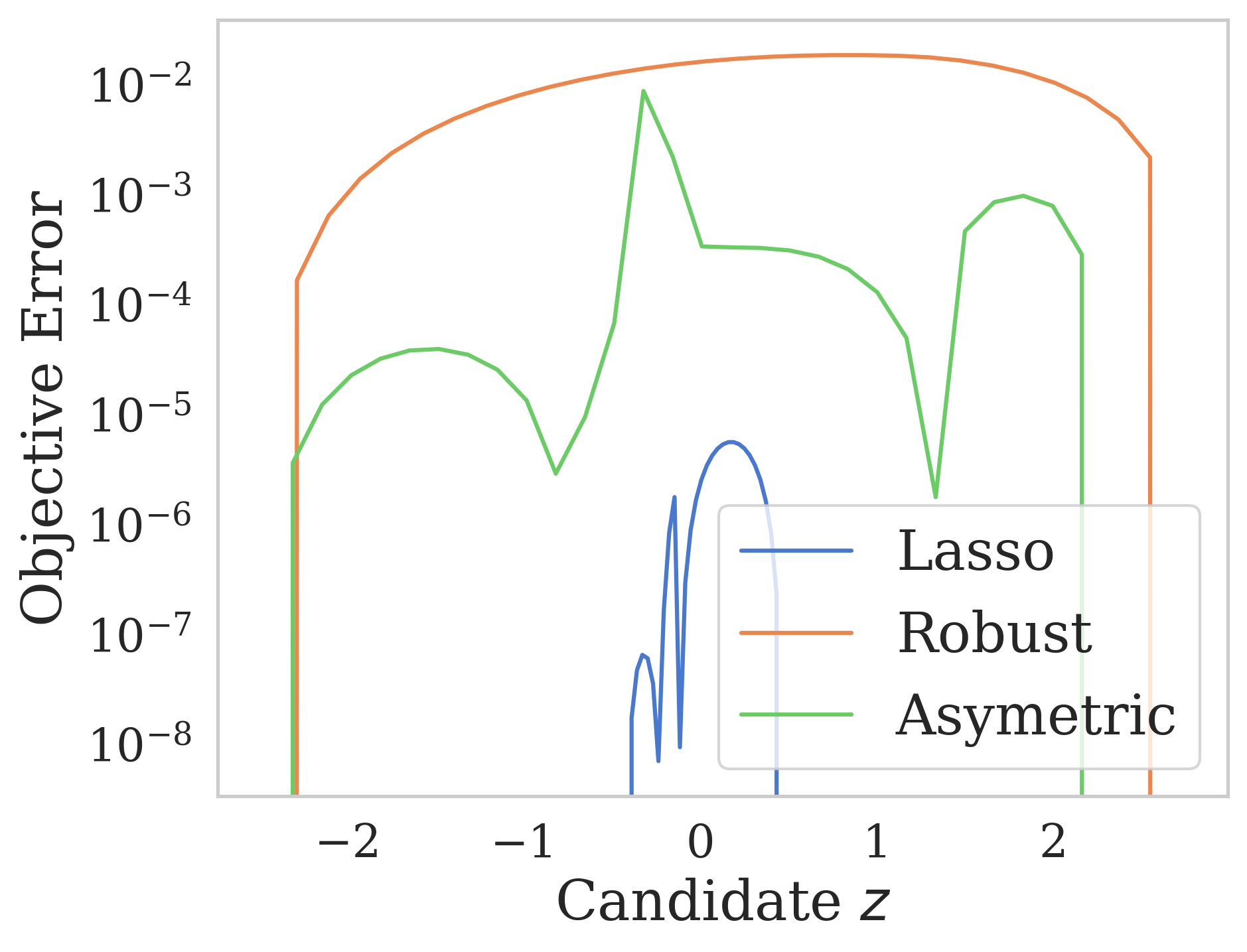}
            \caption[Network2]%
            {{\small Friedman1}}    
            \label{fig:friedman1_search}
        \end{subfigure}
        \hfill
        \begin{subfigure}[b]{0.240\textwidth}  
            \centering 
            \includegraphics[width=\textwidth]{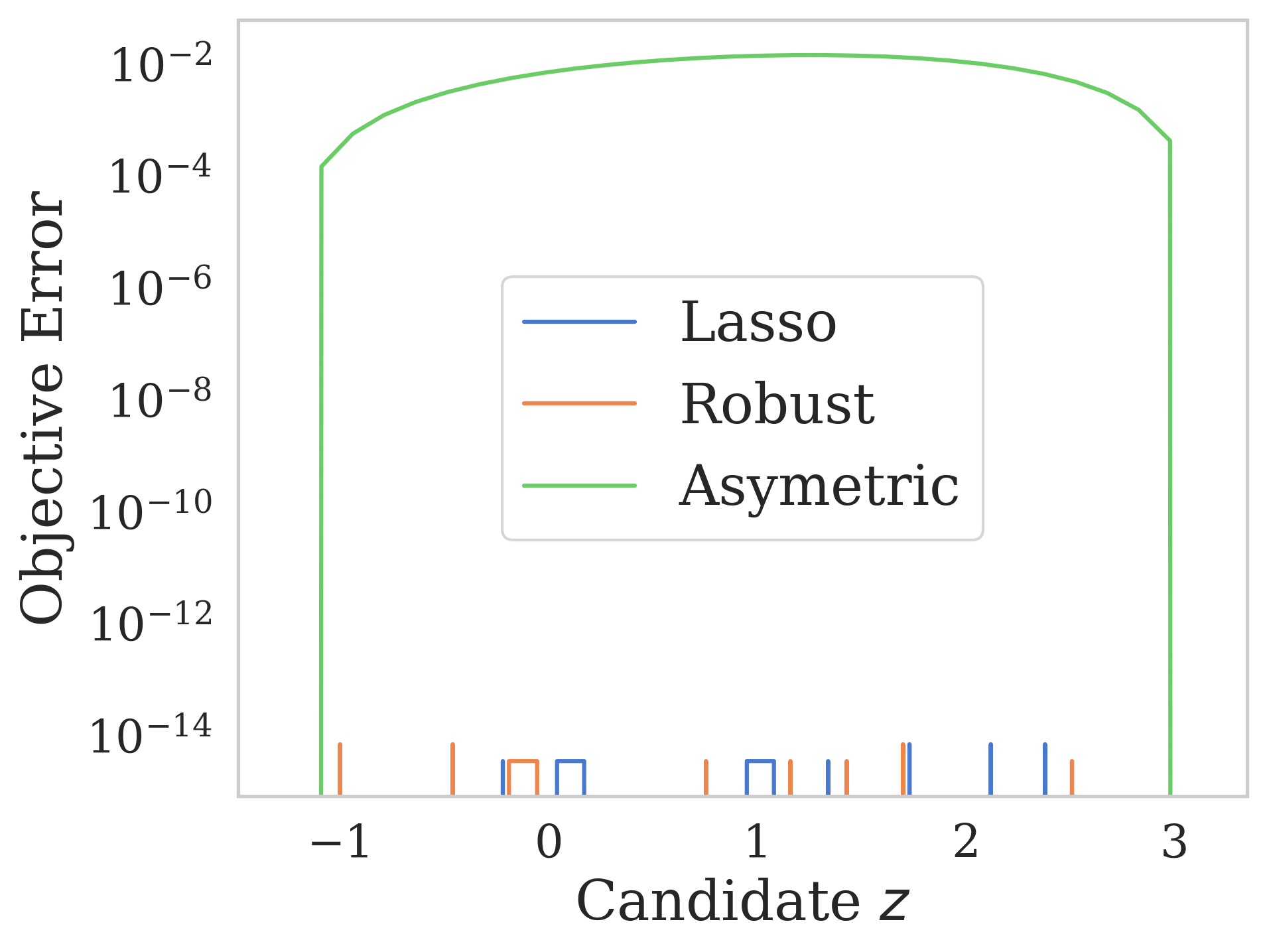}
            \caption[]%
            {{\small Friedman 2}}    
            \label{fig:friedman2_search}
        \end{subfigure}
        \begin{subfigure}[b]{0.240\textwidth}   
            \centering 
            \includegraphics[width=\textwidth]{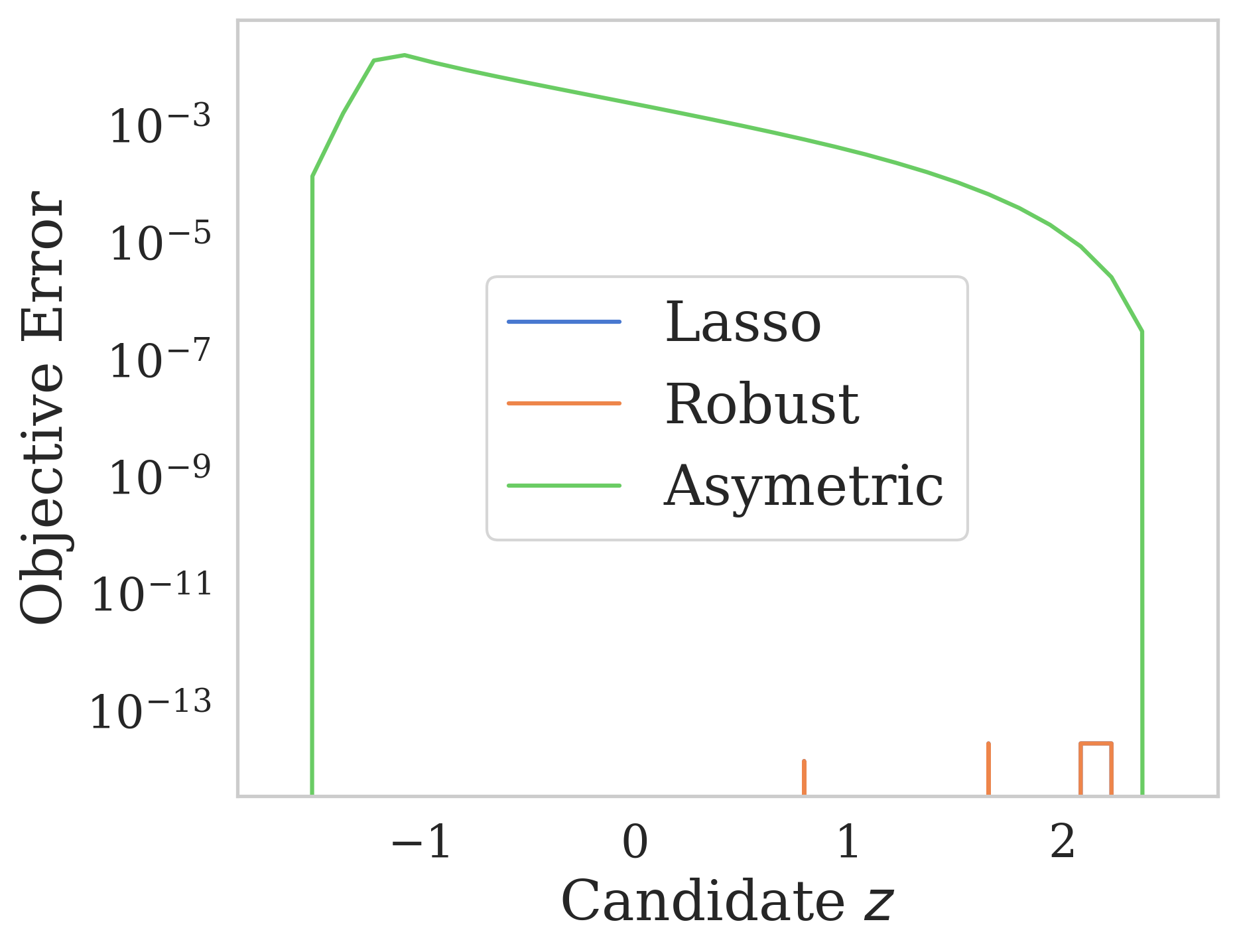}
            \caption[]%
            {{\small Diabetes}}    
            \label{fig:diabetes_search}
        \end{subfigure}
        \hfill
        \begin{subfigure}[b]{0.240\textwidth}   
            \centering 
            \includegraphics[width=\textwidth]{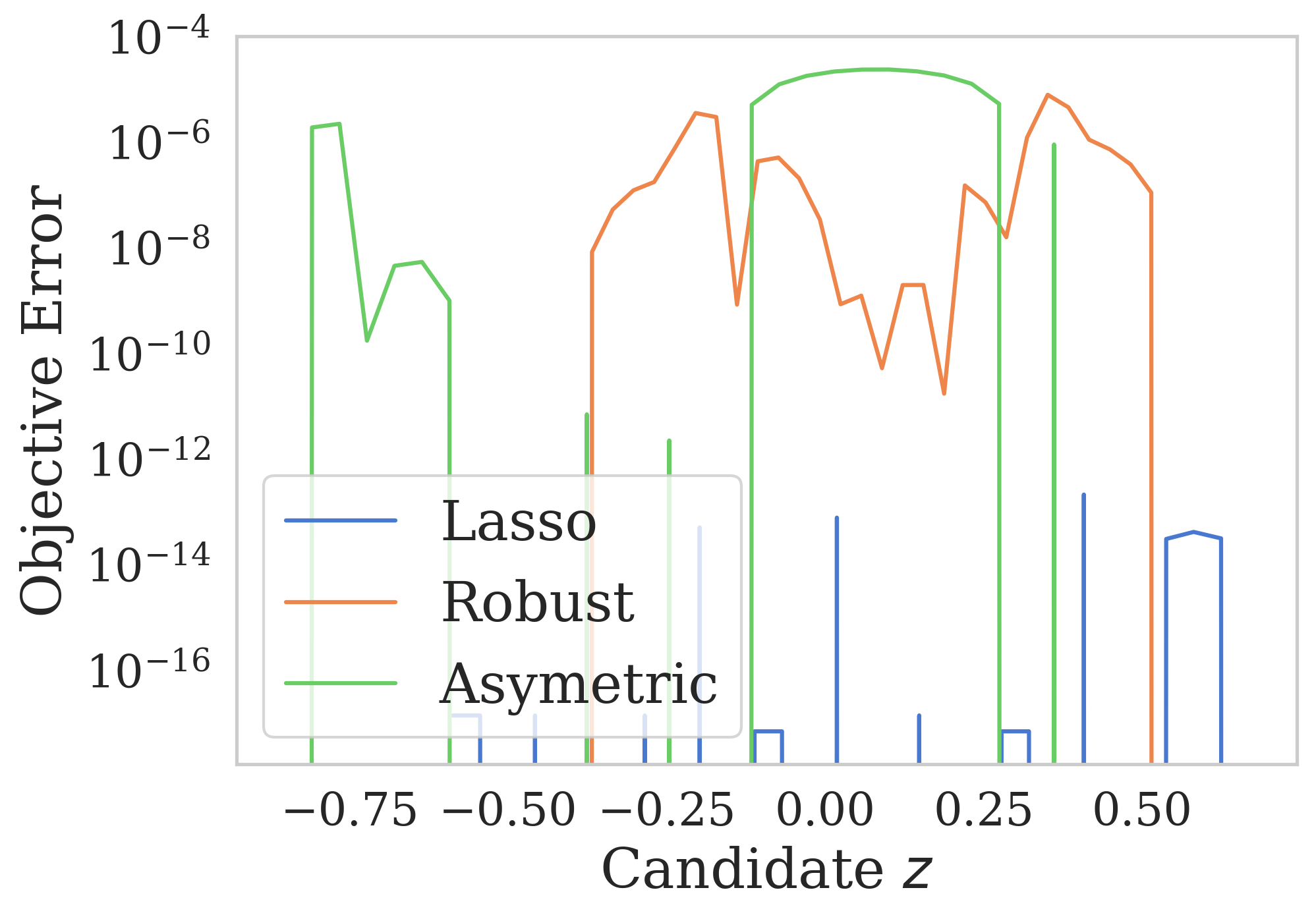}
            \caption[]%
            {{\small  Synthetic}}    
            \label{fig:synthetic_search}
        \end{subfigure}
        \caption[We demonstrate the objective error of our achieved homotopy over the space of possible $y_{n+1}$ on all 4 datasets and 3 loss functions.]
        {\small We demonstrate the objective error of our achieved homotopy over the space of possible $y_{n+1}$ on all four datasets and loss functions.} 
        \label{fig:search_over_spaces}
\end{figure*}

\begin{figure*}[!ht]
        \centering
        \begin{subfigure}[b]{0.240\textwidth}
            \centering
            \includegraphics[width=\textwidth, scale=.25]{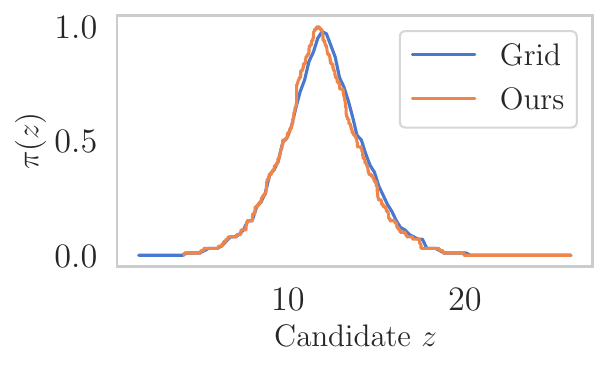}
            \caption[Network2]%
            {{\small Robust on Friedman1}}    
            \label{fig:rf1_cp}
        \end{subfigure}
        \hfill
        \begin{subfigure}[b]{0.240\textwidth}  
            \centering 
            \includegraphics[width=\textwidth]{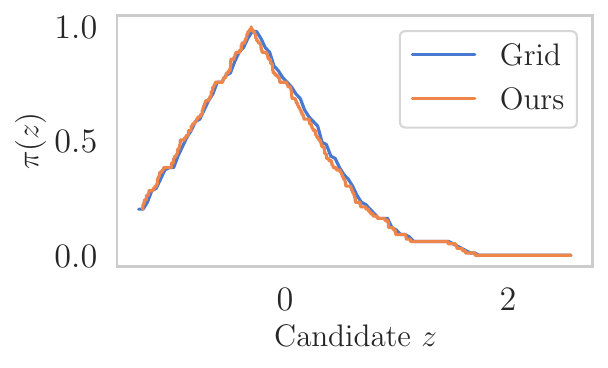}
            \caption[]%
            {{\small Robust on Friedman 2}}    
            \label{fig:rf2_cp}
        \end{subfigure}
        \begin{subfigure}[b]{0.240\textwidth}   
            \centering 
            \includegraphics[width=\textwidth]{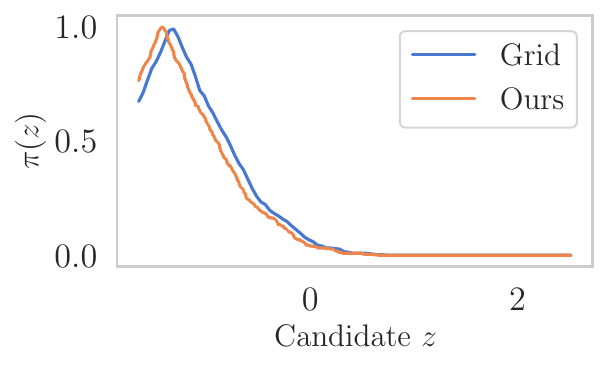}
            \caption[]%
            {{\small Robust on Diabetes}}    
            \label{fig:rd_cp}
        \end{subfigure}
        \hfill
        \begin{subfigure}[b]{0.240\textwidth}   
            \centering 
            \includegraphics[width=\textwidth]{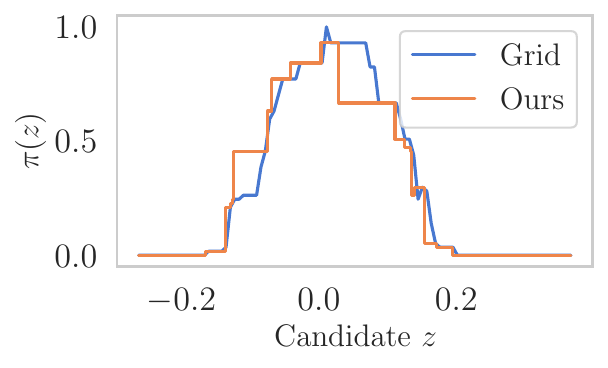}
            \caption[]%
            {{\small Robust on Synthetic}}    
            \label{fig:rsyn_cp}
        \end{subfigure}
        \caption[ The average and standard deviation of critical parameters ]
        {\small The $\pi(z)$ function as generated by a ground truth discretized searching algorithm and by our homotopy drawing algorithm for the Robust loss function over all $4$ datasets.} 
        \label{fig:mean and std of nets}
        \vskip\baselineskip
        \begin{subfigure}[b]{0.240\textwidth}
            \centering
            \includegraphics[width=\textwidth, scale=.25]{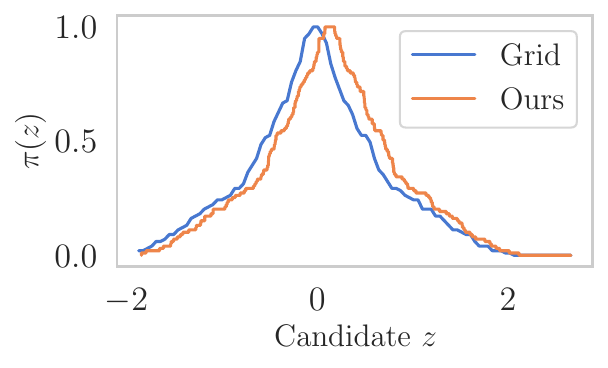}
            \caption[Network2]%
            {{\small Asymmetric on Friedman1}}    
            \label{fig:af1_cp}
        \end{subfigure}
        \hfill
        \begin{subfigure}[b]{0.240\textwidth}  
            \centering 
            \includegraphics[width=\textwidth]{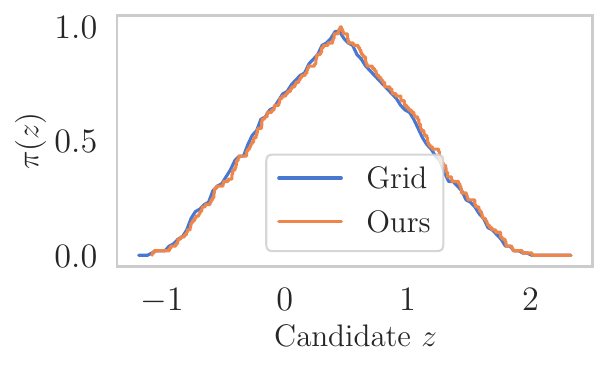}
            \caption[]%
            {{\small Asymmetric on Friedman 2}}    
            \label{fig:af2_cp}
        \end{subfigure}
        \begin{subfigure}[b]{0.240\textwidth}   
            \centering 
            \includegraphics[width=\textwidth]{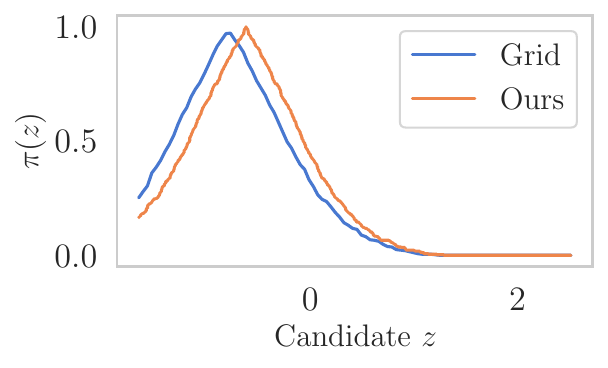}
            \caption[]%
            {{\small Asymmetric Diabetes}}    
            \label{fig:ad_cp}
        \end{subfigure}
        \hfill
        \begin{subfigure}[b]{0.240\textwidth}   
            \centering 
            \includegraphics[width=\textwidth]{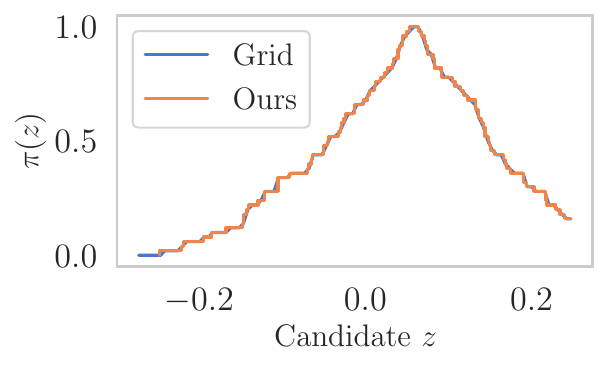}
            \caption[]%
            {{\small Asymmetric on Synthetic}}    
            \label{fig:asyn_cp}
        \end{subfigure}
        \caption[ The average and standard deviation of critical parameters ]
        {\small The $\pi(z)$ function as generated by a ground truth discretized searching algorithm and by our homotopy drawing algorithm for the Asymmetric loss function over all $4$ datasets.} 
        \label{fig:all_cp}

\end{figure*}
\subsection{Homotopy Experiments}

To test our algorithm in terms of homotopy generation, we measure our algorithm's accuracy and efficacy against different baselines across different loss functions. For all baselines and our algorithm, we use Proximal Gradient Descent for Lasso Loss and \texttt{CVXPY} for Robust and Asymmetric as Primal Correctors. Precisely, we measure the negative logarithm of the gap between primal values of the calculated $\hat \beta$ values and a ground truth baseline. We measure this gap across many possible $z$ values and take the average. The ground truth baseline is a Grid-based homotopy, where we compute the homotopy iteratively along a find grid of candidates. Given that we apply the negative logarithm to the primal gap, the larger the value reported, the smaller the true error term and the better the algorithm's performance. Moreover, we report the time taken in seconds required to form the homotopy. Our experiments cover the Lasso, Robust, and Asymmetric functions across all the datasets. \looseness=-1

We report our results in \Cref{tab:homotopy_time} and \Cref{tab:homotopy_primal-diff}. We shorten Synthetic to \texttt{Synth} and Approximate to \texttt{Appr} for brevity. As evident, we see a significant decrease in time used over Approximate Homotopy for most applications of the Lasso Loss with a significant increase in accuracy. On the largest dataset for Lasso Loss, our algorithm gets similar accuracy and is much more efficient. Furthermore, we report similar primal gaps for both ours and the approximate homotopy algorithms on Robust and Asymmetric losses. However, we achieve significant time improvements. Notably, on the Diabetes and Large dataset for Asymmetric loss and the Synthetic and Large dataset for both Asymmetric and Robust losses, we report an almost $50\%$ reduction in the time taken to achieve a similar error. Overall, across all loss types and datasets, we either achieve similar or better errors with the same or less time relative to the standard Approximate Homotopy, demonstrating the capability of our algorithm to efficiently and accurately generate the homotopy. 

To illustrate the accuracy of our algorithm, we plot the optimization error gap over the space of all $z \in [z_{\min}, z_{\max}]$ for all three loss functions and four datasets. We report the figures in \Cref{fig:search_over_spaces}. Notably, we see that on \Cref{fig:synthetic_search}, we achieve all losses better than $10^{-4}$. On other figures, all objective errors are bounded by $10^{-2}$. Our application of Lasso and Robust over all datasets achieves near $0$ objective error over the entire pass.

\begin{table}[!htp]\centering

\scriptsize
\caption{Average Time of Homotopy}
\begin{tabular}{@{}lccccl@{}}\toprule
\label{tab:homotopy_time}
& \multicolumn{5}{c}{\textbf{Dataset}} \\
\cmidrule(lr){2-6}
&\textbf{Synth. }&\textbf{Friedman1} & \textbf{Diabetes} & \textbf{Friedman 2} &  \textbf{Large}  \\\midrule
\textbf{Our Lasso} &\textbf{1.706}&\textbf{1.945} &\textbf{1.0785} &\textbf{0.681} & \textbf{150.012}\\
\textbf{Appr. Lasso} &5.176 &43.823 &70.813 &14.055 & 500.820\\
\textbf{Our Robust}&\textbf{27.156}&1.069 &2.411 &0.701& \textbf{323.372}\\
\textbf{Appr. Robust} &62.894&1.009 &2.734 &0.618 & 607.203\\
\textbf{Our Asym.} &\textbf{9.270} &3.147 &\textbf{27.349} &2.454 & \textbf{41.269} \\
\textbf{Appr. Asym.} &18.963&2.699 &54.149 &3.342 & 82.857\\
\bottomrule

\end{tabular}
\end{table}

 \begin{table}[!htp]\centering
\scriptsize
\caption{Average Negative Logarithm of Primal Gap of Homotopy }
\begin{tabular}{@{}lccccl@{}}\toprule
\label{tab:homotopy_primal-diff}
& \multicolumn{4}{c}{\textbf{Dataset}} \\
\cmidrule(lr){2-6}
&\textbf{Synth.}&\textbf{Friedman1} & \textbf{Diabetes} & \textbf{Friedman2} & \textbf{Large}\\
\midrule
\textbf{Our Lasso}  &\textbf{12.498}&\textbf{15.844} &\textbf{16.001} &\textbf{15.241}& 7.933  \\
\textbf{Appr. Lasso} &6.597&6.469 &7.554 &6.702 & 7.558\\
\textbf{Our Robust} &5.137&2.317 &3.819 &2.778 & 5.223 \\
\textbf{Appr. Robust} &5.990&3.561 &3.712 &4.434 & 5.026 \\
\textbf{Our Asym.} &7.879&3.633 &3.814 &3.058 & 6.101\\
\textbf{Appr. Asym.} &6.939&3.208 &4.032 &2.795 & 5.365 \\
\bottomrule
\end{tabular}
\end{table}

\subsection{Conformal Prediction Experiments}
\begin{figure*}[!ht]
        \centering
    \begin{minipage}[t]{\linewidth}   
            \centering 
                \begin{table}[H]\centering

\scriptsize
\begin{tabular}{lrrrrrrrrrrrrr}\toprule
&\multicolumn{3}{c}{Diabetes Coverage} &\multicolumn{3}{c}{Friedman 1 Coverage} &\multicolumn{3}{c}{Friedman 2 Coverage} &\multicolumn{3}{c}{Synthetic Coverage} \\\cmidrule(lr){2-4}\cmidrule(lr){5-7}\cmidrule(lr){8-10}\cmidrule(lr){11-13}
&\textbf{Lasso} &\textbf{Robust} &\textbf{Asymmetric} &\textbf{Lasso} &\textbf{Robust} &\textbf{Asymmetric} &\textbf{Lasso} &\textbf{Robust} &\textbf{Asymmetric} &\textbf{Lasso} &\textbf{Robust} &\textbf{Asymmetric} \\
\textbf{Ours} &0.933 &0.933 &0.867 &0.900 &0.900 &0.883 &0.900 &0.850 &0.933 &0.900 &0.900 &0.900 \\
\textbf{Approximate} &0.933 &0.933 &0.867 &0.850 &0.900 &0.833 &0.900 &0.800 &0.900 &0.850 &0.900 &0.850 \\
\textbf{Split} &0.933 &0.867 &0.800 &0.867 &0.933 &0.867 &1.000 &0.867 &0.933 &1.000 &1.000 &0.900 \\
\textbf{Grid} &0.933 &0.933 &0.867 &0.767 &0.767 &0.767 &0.900 &0.767 &0.933 &0.850 &0.850 &0.850 \\
\textbf{Oracle} &0.933 &0.933 &0.867 &0.867 &0.967 &0.933 &0.900 &0.867 &0.933 &1.000 &0.900 &1.000 \\
\bottomrule
\end{tabular}
\caption{Coverage Results over Several Datasets}\label{tab:coverage}
\end{table}

\label{fig:asyn_cp}
\end{minipage}
\begin{minipage}[t]{\linewidth}
       \begin{table}[H]\centering

\scriptsize
\begin{tabular}{lrrrrrrrrrrrrr}\toprule
&\multicolumn{3}{c}{Diabetes Length} &\multicolumn{3}{c}{Friedman 1 Length} &\multicolumn{3}{c}{Friedman 2 Length} &\multicolumn{3}{c}{Synthetic Length} \\\cmidrule(lr){2-4}\cmidrule(lr){5-7}\cmidrule(lr){8-10}\cmidrule(lr){11-13}
 &\textbf{Lasso} &\textbf{Robust} &\textbf{Asymmetric} &\textbf{Lasso} &\textbf{Robust} &\textbf{Asymmetric} &\textbf{Lasso} &\textbf{Robust} &\textbf{Asymmetric} &\textbf{Lasso} &\textbf{Robust} &\textbf{Asymmetric} \\
\textbf{Ours}&2.234 &2.230  &2.356 &2.340 &2.570 &2.875 &2.004 &2.191 &2.621 &0.632 &0.714 &0.702 \\
\textbf{Approximate} &2.262 &2.237 &2.381  &2.368 &2.599 &2.897 &2.024 &2.245 &2.618 &0.786 &0.705 &0.790 \\
\textbf{Split}  &2.409 &2.429 &2.469 &2.589 &2.837 &3.219 &2.361 &2.448 &2.888 &0.831 &0.831 &0.831 \\
\textbf{Grid}  &2.286 &2.255 &2.390 &2.475 &2.782 &2.982 &2.108 &2.338 &2.741 &0.872 &0.903 &0.651 \\
\textbf{Oracle} &2.320 &2.337 &2.396  &2.204 &2.508 &2.787 &2.240 &2.447 &2.550 &0.062 &0.001 &0.226 \\
\bottomrule
\end{tabular}
\caption{Length Results over Several Datasets}\label{tab:length}
\end{table}
\end{minipage}
\begin{minipage}[t]{\linewidth}
\begin{table}[H]\centering
\scriptsize
\begin{tabular}{lrrrrrrrrrrrrr}\toprule
&\multicolumn{3}{c}{Diabetes Time (s)} &\multicolumn{3}{c}{Friedman 1 Time (s)} &\multicolumn{3}{c}{Friedman 2 Time (s)} &\multicolumn{3}{c}{Synthetic Time (s)} \\\cmidrule(lr){2-4}\cmidrule(lr){5-7}\cmidrule(lr){8-10}\cmidrule(lr){11-13}
&\textbf{Lasso} &\textbf{Robust} &\textbf{Asymmetric} &\textbf{Lasso} &\textbf{Robust} &\textbf{Asymmetric} &\textbf{Lasso} &\textbf{Robust} &\textbf{Asymmetric} &\textbf{Lasso} &\textbf{Robust} &\textbf{Asymmetric} \\
\textbf{Ours} &0.658 &1.125 &6.865 &0.493 &0.398 &1.047 &0.326 &0.305 &0.901 &6.056 &115.565 &4.037 \\
\textbf{Approximate} &4.012 &33.858 &213.995 &5.127 &4.234 &17.111 &1.538 &2.991 &13.178 &30.124 &332.152 &9.142 \\
\textbf{Split} &0.025 &0.086 &0.474 &0.139 &0.041 &0.102 &0.036 &0.042 &0.100 &0.599 &0.122 &0.039 \\
\textbf{Grid} &0.769 &5.692 &58.632 &1.704 &2.238 &11.331 &0.564 &2.068 &8.834 &29.150 &431.398 &2.418 \\
\textbf{Oracle} &0.049 &0.188 &1.032 &0.116 &0.034 &0.212 &0.040 &0.033 &0.193 &0.429 &0.256 &0.063 \\
\bottomrule
\end{tabular}
\caption{Time Results over Several Datasets}\label{tab:time}
\end{table}
\end{minipage}
\caption{We demonstrate the performance of our Conformal Set Algorithm against several baselines across many datasets and loss functions. Our algorithm maintains strong coverage, length, and time metrics across many loss functions and datasets.}
\end{figure*}
It is a natural question whether this improvement in the generation of the homotopy function yields a strong conformal set generation algorithm. We demonstrate this both visually and empirically. We draw the $\pi(z)$ function for visual verification over all four datasets and three loss functions using our algorithm. To form a baseline, we use the Grid algorithm. This algorithm is a ground truth to which we compare our $\pi(z)$ function. For empirical verification, we compare coverage, length, and the time of our method vs. several important baselines. Namely, we use the Grid method, Approximate homotopy from \cite{Ndiaye_Takeuchi19}, the Oracle methodology, which has access to the true value of $y_{n+1}$ to form its conformal interval, and the Split methodology, which uses a calibration dataset to calibrate the conformal values predicted but loses statistical validity.

\paragraph{Visual Results} We report the figures in \Cref{fig:all_cp}. As is evident over all loss functions and datasets, our estimated $\pi(z)$ roughly traces the true $\pi(z)$ generated by the discretized searching algorithm. While on particular examples, notably Figures \ref{fig:rsyn_cp},  \ref{fig:af1_cp}, and \ref{fig:ad_cp}, the trace is less accurate than the others. However, the error is within a reasonable range to achieve the desired coverage and length guarantees. We also report similar experiments for the Lasso loss, but we mention these in \Cref{sec:additionalvisual} since our method is exact for the Lasso loss. We demonstrate that our homotopy drawing algorithm yields an efficient and accurate methodology for generating conformal sets for general loss functions as tested on several datasets. \looseness=-1

\paragraph{Empirical Results} We report our empirical results in \Cref{tab:coverage}, \Cref{tab:length}, and \Cref{tab:time}. We can see that most methods maintain strong coverage guarantees over all datasets. For our experiments, we used $\alpha = 0.1$, and most of our results hover around that level of coverage. Moreover, in \Cref{tab:length}, we see that except for Oracle, across several loss functions and datasets, our algorithm achieves the smallest length. The Oracle, however, consistently has the best length due to its knowledge of the true $y_{n+1}$. Also, our algorithm is the fastest over all homotopy methods but slower than Split and Oracle, as seen in \Cref{tab:time}. Therefore, our experiments indicate that our Conformal Prediction Algorithm is competitive in all coverage, length, and time measures. \looseness=-1

\section{Conclusion}
Our results demonstrate that we can efficiently and accurately draw the homotopy of the typicalness function of a model over several loss functions via exploiting the sparsity structure of the Linear Models with $\ell_1$ regularization. Furthermore, we achieve explicit closed-form equations to model the behavior of this homotopy. Previous results mainly focus on quadratic loss functions or ignore the structure of the regularization altogether. Our framework, instead, captures this information and uses it to improve the accuracy of our final results. 
Several avenues for extending our research remain interesting. Spline instead of linear interpolation may yield improved accuracy for different loss functions. Additionally, smoothing at the kinks may reduce the algorithm's sensitivity to the primal corrector's results. Furthermore, we would like to expand our work to non-convex settings such as deep learning in future works.

\bibliographystyle{icml2023}
\bibliography{ref}

\begin{thebibliography}{34}
\providecommand{\natexlab}[1]{#1}
\providecommand{\url}[1]{\texttt{#1}}
\expandafter\ifx\csname urlstyle\endcsname\relax
  \providecommand{\doi}[1]{doi: #1}\else
  \providecommand{\doi}{doi: \begingroup \urlstyle{rm}\Url}\fi

\bibitem[Allgower \& Georg(2012)Allgower and Georg]{Allgower_Georg12}
Allgower, E.~L. and Georg, K.
\newblock \emph{Numerical continuation methods: an introduction}.
\newblock Springer Science \& Business Media, 2012.

\bibitem[Bach et~al.(2004)Bach, Thibaux, and Jordan]{Bach_Thibaux_Jordan04}
Bach, F., Thibaux, R., and Jordan, M.
\newblock Computing regularization paths for learning multiple kernels.
\newblock \emph{Advances in neural information processing systems}, 2004.

\bibitem[Balasubramanian et~al.(2014)Balasubramanian, Ho, and
  Vovk]{Balasubramanian_Ho_Vovk14}
Balasubramanian, V., Ho, S.-S., and Vovk, V.
\newblock \emph{Conformal prediction for reliable machine learning: theory,
  adaptations and applications}.
\newblock Elsevier, 2014.

\bibitem[Bates et~al.(2021)Bates, Cand{\`e}s, Lei, Romano, and
  Sesia]{Bates_Candes_Lei_Romano_Sesia21}
Bates, S., Cand{\`e}s, E., Lei, L., Romano, Y., and Sesia, M.
\newblock Testing for outliers with conformal p-values.
\newblock \emph{arXiv preprint arXiv:2104.08279}, 2021.

\bibitem[Bertrand et~al.(2022)Bertrand, Klopfenstein, Bannier, Gidel, and
  Massias]{skglm}
Bertrand, Q., Klopfenstein, Q., Bannier, P.-A., Gidel, G., and Massias, M.
\newblock Beyond l1: Faster and better sparse models with skglm.
\newblock In \emph{NeurIPS}, 2022.

\bibitem[Breiman(1996)]{breiman1996bagging}
Breiman, L.
\newblock Bagging predictors.
\newblock \emph{Machine learning}, 24\penalty0 (2):\penalty0 123--140, 1996.

\bibitem[Br{\"o}cker \& Kantz(2011)Br{\"o}cker and Kantz]{Brocker_Kantz11}
Br{\"o}cker, J. and Kantz, H.
\newblock The concept of exchangeability in ensemble forecasting.
\newblock \emph{Nonlinear Processes in Geophysics}, 2011.

\bibitem[Cella \& Ryan(2020)Cella and Ryan]{Cella_Martin20}
Cella, L. and Ryan, R.
\newblock Valid distribution-free inferential models for prediction.
\newblock \emph{arXiv preprint arXiv:2001.09225}, 2020.

\bibitem[Chang \& Hung(2007)Chang and Hung]{Chang_Hung07}
Chang, Y.-C. and Hung, W.-L.
\newblock Linex loss functions with applications to determining the optimum
  process parameters.
\newblock \emph{Quality \& Quantity}, 2007.

\bibitem[Chernozhukov et~al.(2018)Chernozhukov, W{\"u}thrich, and
  Zhu]{Chernozhukov_Wuthrich_Zhu18}
Chernozhukov, V., W{\"u}thrich, K., and Zhu, Y.
\newblock Exact and robust conformal inference methods for predictive machine
  learning with dependent data.
\newblock \emph{Conference On Learning Theory}, 2018.

\bibitem[Chernozhukov et~al.(2021)Chernozhukov, W{\"u}thrich, and
  Zhu]{Chernozhukov_Wuthrich_Zhu21}
Chernozhukov, V., W{\"u}thrich, K., and Zhu, Y.
\newblock An exact and robust conformal inference method for counterfactual and
  synthetic controls.
\newblock \emph{Journal of the American Statistical Association}, 2021.

\bibitem[Diamond \& Boyd(2016)Diamond and Boyd]{Diamond_Boyd16}
Diamond, S. and Boyd, S.
\newblock {CVXPY}: A {P}ython-embedded modeling language for convex
  optimization.
\newblock \emph{J. Mach. Learn. Res}, 2016.

\bibitem[Fisch et~al.(2021)Fisch, Schuster, Jaakkola, and
  Barzilay]{Fisch_Schuster_Jaakkola_Barzilay21}
Fisch, A., Schuster, T., Jaakkola, T., and Barzilay, R.
\newblock Few-shot conformal prediction with auxiliary tasks.
\newblock \emph{ICML}, 2021.

\bibitem[Garrigues \& Ghaoui(2009)Garrigues and Ghaoui]{Garrigues_Ghaoui09}
Garrigues, P. and Ghaoui, L.~E.
\newblock An homotopy algorithm for the lasso with online observations.
\newblock In \emph{Advances in neural information processing systems}, pp.\
  489--496, 2009.

\bibitem[Gruber(2010)]{Gruber10}
Gruber, M.
\newblock \emph{Regression estimators: A comparative study}.
\newblock JHU Press, 2010.

\bibitem[H.(1991)]{Friedman91}
H., F.~J.
\newblock {Multivariate Adaptive Regression Splines}.
\newblock \emph{The Annals of Statistics}, 1991.

\bibitem[Hastie et~al.(2004)Hastie, Rosset, Tibshirani, and
  Zhu]{Hastie_Rosset_Tibshirani_Zhu04}
Hastie, T., Rosset, S., Tibshirani, R., and Zhu, J.
\newblock The entire regularization path for the support vector machine.
\newblock \emph{J. Mach. Learn. Res}, 2004.

\bibitem[Ho \& Wechsler(2008)Ho and Wechsler]{Ho_Wechsler08}
Ho, S.-S. and Wechsler, H.
\newblock Query by transduction.
\newblock \emph{IEEE transactions on pattern analysis and machine
  intelligence}, 2008.

\bibitem[Holland(2020)]{Holland20}
Holland, M.~J.
\newblock Making learning more transparent using conformalized performance
  prediction.
\newblock \emph{arXiv preprint arXiv:2007.04486}, 2020.

\bibitem[Laxhammar \& Falkman(2015)Laxhammar and Falkman]{Laxhammar_Falkman15}
Laxhammar, R. and Falkman, G.
\newblock Inductive conformal anomaly detection for sequential detection of
  anomalous sub-trajectories.
\newblock \emph{Annals of Mathematics and Artificial Intelligence}, 2015.

\bibitem[Lei(2019)]{Lei19}
Lei, J.
\newblock Fast exact conformalization of lasso using piecewise linear homotopy.
\newblock \emph{Biometrika}, 2019.

\bibitem[Lin et~al.(2022)Lin, Trivedi, and Sun]{Lin_Trivedi_Sun22}
Lin, Z., Trivedi, S., and Sun, J.
\newblock Conformal prediction intervals with temporal dependence.
\newblock \emph{Transactions of Machine Learning Research}, 2022.

\bibitem[Mairal \& Yu(2012)Mairal and Yu]{Mairal_Yu12}
Mairal, J. and Yu, B.
\newblock Complexity analysis of the lasso regularization path.
\newblock \emph{ICML}, 2012.

\bibitem[Ndiaye(2022)]{Ndiaye22}
Ndiaye, E.
\newblock Stable conformal prediction sets.
\newblock In \emph{International Conference on Machine Learning}. PMLR, 2022.

\bibitem[Ndiaye \& Takeuchi(2019)Ndiaye and Takeuchi]{Ndiaye_Takeuchi19}
Ndiaye, E. and Takeuchi, I.
\newblock Computing full conformal prediction set with approximate homotopy.
\newblock \emph{NeurIPS}, 2019.

\bibitem[Ndiaye \& Takeuchi(2021{\natexlab{a}})Ndiaye and
  Takeuchi]{Ndiaye_Takeuchi21}
Ndiaye, E. and Takeuchi, I.
\newblock Root-finding approaches for computing conformal prediction set.
\newblock \emph{arXiv preprint arXiv:2104.06648}, 2021{\natexlab{a}}.

\bibitem[Ndiaye \& Takeuchi(2021{\natexlab{b}})Ndiaye and
  Takeuchi]{Ndiaye_Takeuchi22}
Ndiaye, E. and Takeuchi, I.
\newblock Continuation path with linear convergence rate.
\newblock \emph{arXiv e-prints}, pp.\  arXiv--2112, 2021{\natexlab{b}}.

\bibitem[Park \& Hastie(2007)Park and Hastie]{Park_Hastie07}
Park, M.~Y. and Hastie, T.
\newblock L1-regularization path algorithm for generalized linear models.
\newblock \emph{Journal of the Royal Statistical Society: Series B (Statistical
  Methodology)}, 2007.

\bibitem[Pedregosa et~al.(2011)Pedregosa, Varoquaux, Gramfort, Michel, Thirion,
  Grisel, Blondel, Prettenhofer, Weiss, Dubourg, Vanderplas, Passos,
  Cournapeau, Brucher, Perrot, and Duchesnay]{scikit-learn}
Pedregosa, F., Varoquaux, G., Gramfort, A., Michel, V., Thirion, B., Grisel,
  O., Blondel, M., Prettenhofer, P., Weiss, R., Dubourg, V., Vanderplas, J.,
  Passos, A., Cournapeau, D., Brucher, M., Perrot, M., and Duchesnay, E.
\newblock Scikit-learn: Machine learning in {P}ython.
\newblock \emph{Journal of Machine Learning Research}, 12:\penalty0 2825--2830,
  2011.

\bibitem[Rosset \& Zhu(2007)Rosset and Zhu]{Rosset_Zhu07}
Rosset, S. and Zhu, J.
\newblock Piecewise linear regularized solution paths.
\newblock \emph{The Annals of Statistics}, 2007.

\bibitem[Shafer \& Vovk(2008)Shafer and Vovk]{Shafer_Vovk08}
Shafer, G. and Vovk, V.
\newblock A tutorial on conformal prediction.
\newblock \emph{Journal of Machine Learning Research}, 2008.

\bibitem[Tibshirani(2013)]{Tibshirani13}
Tibshirani, R.~J.
\newblock The lasso problem and uniqueness.
\newblock \emph{Electronic Journal of Statistics}, 2013.

\bibitem[Vovk et~al.(2005)Vovk, Gammerman, and Shafer]{Vovk_Gammerman_Shafer05}
Vovk, V., Gammerman, A., and Shafer, G.
\newblock \emph{Algorithmic learning in a random world}.
\newblock Springer, 2005.

\bibitem[Xu \& Xie(2021)Xu and Xie]{Xu_Xie20}
Xu, C. and Xie, Y.
\newblock Conformal prediction interval for dynamic time-series.
\newblock \emph{ICML}, 2021.

\end{thebibliography}
\appendix
\newpage
\onecolumn
\appendix

\section{Additional Visualizations}
We have provided two extra visualizations for the reader's understanding. We have provided figures of the homotopy for different loss functions and what the conformality function $\pi(z)$ looks like for the Quadratic Loss function on both our real and synthetic datasets. 
\label{sec:additionalvisual}
\paragraph{Homotopy Visualizations}
We run our homotopy generation algorithm over Quadratic Loss, Robust, and Asymmetric Loss functions over several low-dimensional synthetic examples. As we can see in \Cref{fig:lf1_cp}, for the Quadratic Loss, our homotopy algorithm perfectly matches that of the Grid baseline since our algorithm captures the Quadratic Loss homotopy from \citet{Lei19} exactly. Moreover, in \Cref{fig:lf2_cp} and \Cref{fig:ld_cp}, we see that our algorithm very closely identifies the homotopy of the Grid algorithm. In the Robust and Asymmetric cases, the linearization causes a slight miss in the kink, but the difference is negligible. Across all dimensions, our homotopy generation algorithm closely tracks that of the baseline Grid algorithm. These visualizations verify visually that our homotopy generation algorithm is accurate.

\paragraph{Conformity Function for Quadratic Loss} We also visualize what the conformality function $\pi(z)$ looks like across several datasets for the Quadratic Loss function. We do not include these in the main manuscript since our algorithm is exact on the Quadratic Loss function, and no visual verification is truly needed. Nevertheless, we provide such visuals in \Cref{fig:mean and std of nets}. Our Conformal Prediction algorithm indeed matches precisely that of the Grid baseline algorithm. This confirms our claims that our algorithm is indeed exact on the Quadratic Loss function. 

\begin{figure*}
        \centering
        \begin{subfigure}[b]{0.30\textwidth}
            \centering
            \includegraphics[width=\textwidth]{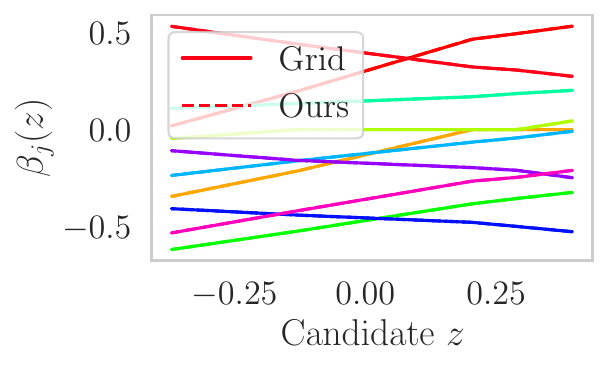}
            \caption[Network2]%
            {{\small Lasso}}    
            \label{fig:lf1_cp}
        \end{subfigure}
        \hfill
        \begin{subfigure}[b]{0.30\textwidth}  
            \centering 
            \includegraphics[width=\textwidth]{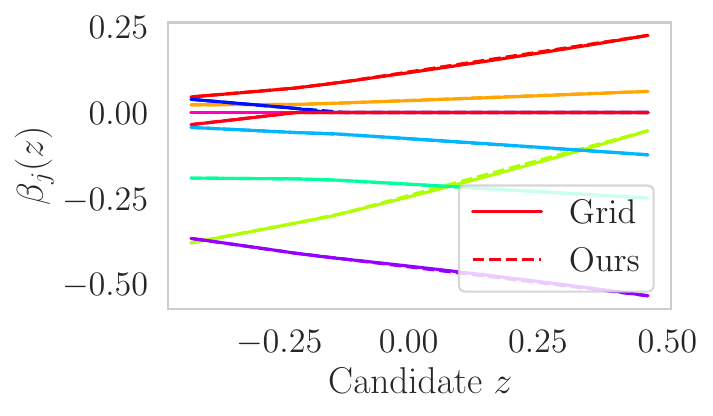}
            \caption[]%
            {{\small Asymmetric}}    
            \label{fig:lf2_cp}
        \end{subfigure}
        \hfill
        \begin{subfigure}[b]{0.30\textwidth}   
            \centering 
            \includegraphics[width=\textwidth]{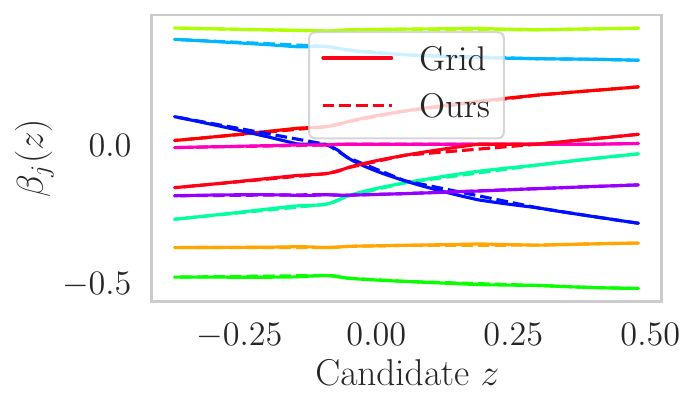}
            \caption[]%
            {{\small Robust}}    
            \label{fig:ld_cp}
        \end{subfigure}
        \label{fig:figure1}
        \caption{We generate several example homotopies over Lasso, Asymmetric, and Robust loss functions. We plot using our algorithm and a discretized search space algorithm, where the space of potential $z_{n+1}$ values is split into several points, and we solve for $\beta$ using Proximal Gradient Descent at each point.}
\end{figure*}
\begin{figure*}
    \begin{subfigure}[b]{0.240\textwidth}
            \centering
            \includegraphics[width=\textwidth, scale=.25]{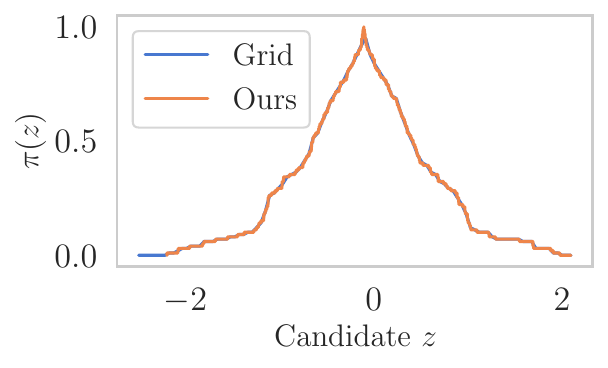}
            \caption[Network2]%
            {{\small Lasso on Friedman1}}    
            \label{fig:lf1_cp}
        \end{subfigure}
        \hfill
        \begin{subfigure}[b]{0.240\textwidth}  
            \centering 
            \includegraphics[width=\textwidth]{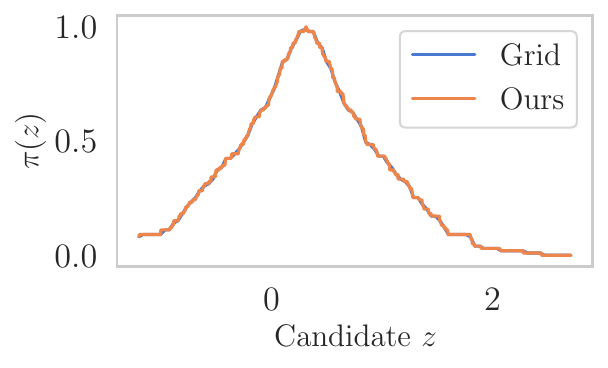}
            \caption[]%
            {{\small Lasso on Friedman 2}}    
            \label{fig:lf2_cp}
        \end{subfigure}
        \begin{subfigure}[b]{0.240\textwidth}   
            \centering 
            \includegraphics[width=\textwidth]{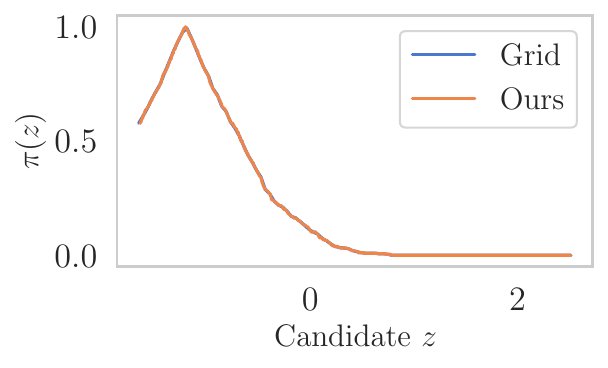}
            \caption[]%
            {{\small Lasso on Diabetes}}    
            \label{fig:ld_cp}
        \end{subfigure}
        \hfill
        \begin{subfigure}[b]{0.240\textwidth}   
            \centering 
            \includegraphics[width=\textwidth]{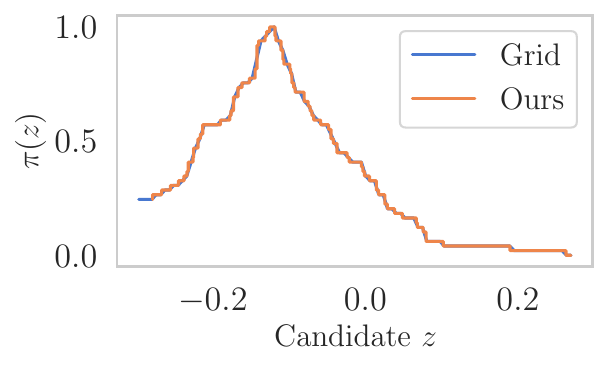}
            \caption[]%
            {{\small Lasso on Synthetic}}    
            \label{fig:lsyn_cp}
        \end{subfigure}
        \caption[ The average and standard deviation of critical parameters ]
        {\small The $\pi(z)$ function as generated by a ground truth discretized searching algorithm and by our homotopy drawing algorithm for the Quadratic Loss function over all $4$ datasets.} 
        \label{fig:mean and std of nets}
\end{figure*}

\newpage
\section{Proofs for Properties of GLM's}
\subsection{Proof of \Cref{lem:uniqueness_solution}}
\uniquesolution*

\begin{proof}
The Fermat rule reads 
$$0 \in \{X^{\top} \partial_2 f(y(z),y^{\star}(z))\} + \lambda \partial \norm{\cdot}_1(\beta^{\star}(z)) \enspace.$$
Defining $v(z) \in \partial \norm{\cdot}_1(\beta^{\star}(z))$ yields \Cref{eq:fermat_equality}. To show \Cref{eq:subdifferential_l1norm}, we look at $v_j(z)$ for any index $j$.
We remind that by separability of the $\ell_1$ norm, we have
$v_j(z) = \partial |\cdot|(\beta_j^{\star}(z))$. Hence, $v_j(z) = \operatorname{sign}(\beta_j^{\star}(z))$ if $\beta_j^{\star}(z) \neq 0$ and $v_j(z) \in [-1, 1]$ otherwise. This proves the claim.

\end{proof}
\subsection{Proof of \Cref{lem:unique_path}}
\uniquepath*
\begin{proof}
We first prove that $A(z)$ is unique. From the definition of the active set, we have $$A(z) = \left\{j \in [p]:\; |X_{j}^{\top}\partial_2 f(y(z),X\beta^{\star}(z))| = \lambda \right\}\enspace,$$
where we remind that 
$$\beta^{\star}(z) \in \argmin_{\beta \in \bbR^{|p|}} f(y(z),X \beta) + \lambda \norm{\beta}_1 \enspace.$$
Since, from strict convexity of $f$, the prediction $X\beta^{\star}(z)$ is unique for any solution $\beta^{\star}(z)$ to the aforementioned optimization problem, we have $A(z)$ is uniquely defined. From the first order optimality condition, it exists $v(z) \in \partial \norm{\cdot}_1(\beta^\star(z))$
$$
0 \in X^\top \partial_2 f(y(z), X\beta^\star(z)) + \lambda v(z) \enspace.
$$
Restricted to the active set yields
$$
0 \in X_{A}^\top \partial_2 f(y(z), X_{A}\beta_{A}^\star(z)) + \lambda v_{A}(z)
\Longleftrightarrow
\beta_A^{\star}(z) \in \argmin_{w \in \bbR^{|A|}} f(y(z),X_A w) + \lambda \norm{w}_1 \enspace.$$
Since $f$ is strictly convex and $X_A$ is full rank, the latter optimization problem is strictly convex meaning $\beta_A^{\star}(z)$ is unique. 
\end{proof}

\subsection{Proof of \Cref{lem:compact}}
\begin{restatable}[]{lem}{compact}
\label{lem:compact}
Let $\mathbf{0}$ be the vector of $0$'s, For all $z \in [y_{\min}, y_{\max}]$, we have that the optimal weights $\beta^{\star}(z)$ satisfy $$\{ \beta^{\star}(z) : z \in [z_{\min}, z_{\max}] \} \subset \{ \beta : \norm{\beta}_1 \leq {R}/{\lambda} \} \text{ where } R = \sup_{z \in [z_{\min}, z_{\max}]} f(y(z), \mathbf{0})\text{.}$$

\end{restatable}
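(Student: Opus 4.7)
The plan is to exploit the optimality of $\beta^\star(z)$ together with the fact that the loss $f$ is nonnegative (as stated when introducing the empirical loss in Section 2). Since $\beta^\star(z)$ minimizes $\beta \mapsto f(y(z), X\beta) + \lambda \norm{\beta}_1$ over $\mathbb{R}^p$, plugging in the feasible competitor $\beta = \mathbf{0}$ immediately yields the inequality
$$f(y(z), X\beta^\star(z)) + \lambda \norm{\beta^\star(z)}_1 \leq f(y(z), X\cdot\mathbf{0}) + \lambda \norm{\mathbf{0}}_1 = f(y(z), \mathbf{0}).$$

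Next, I would use the nonnegativity of the loss to drop the first term on the left-hand side, obtaining $\lambda \norm{\beta^\star(z)}_1 \leq f(y(z), \mathbf{0})$. Dividing through by $\lambda > 0$ gives the pointwise bound $\norm{\beta^\star(z)}_1 \leq f(y(z), \mathbf{0})/\lambda$ for every $z \in [z_{\min}, z_{\max}]$.

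Finally, I would take the supremum over $z \in [z_{\min}, z_{\max}]$ on the right-hand side, which is precisely the definition of $R$. This yields $\norm{\beta^\star(z)}_1 \leq R/\lambda$ uniformly in $z$, establishing the claimed inclusion $\{\beta^\star(z) : z \in [z_{\min}, z_{\max}]\} \subset \{\beta : \norm{\beta}_1 \leq R/\lambda\}$.

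There is no real obstacle here: the argument is the standard ``minimizer beats zero'' trick combined with nonnegativity of $f$. The only mild subtlety is ensuring that the supremum $R$ is finite, which follows from continuity of $z \mapsto f(y(z), \mathbf{0})$ (guaranteed by the smoothness assumptions on $f$ invoked later in Lemma \ref{lm:bound_derivatives}) together with compactness of $[z_{\min}, z_{\max}]$, via the Weierstrass extreme value theorem.
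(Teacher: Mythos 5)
Your proof is correct and essentially identical to the paper's: both invoke optimality of $\beta^\star(z)$ against the competitor $\mathbf{0}$, use nonnegativity of $f$ to drop the data-fitting term, divide by $\lambda$, and take the supremum over $z$. The only cosmetic difference is that the paper first names the objective $P(\beta, z)$ before comparing $P(\beta^\star(z), z)$ to $P(\mathbf{0}, z)$, which you fold into a single display.
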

\begin{proof}
Let's denote the objective function as 
$$ P(\beta, z) = f(y(z), X\beta) + \lambda \norm{\beta}_1 \text{.}$$

We remind the reader that the solution $\beta^{\star}(z)$ satisfies $\beta^{\star}(z) = \argmin_{\beta} P(\beta, z)$. By optimality and assuming that $f$ is non-negative, we have for any $z$
\begin{align*}
\lambda \norm{\beta^{\star}(z)}_1 &\leq P(\beta^{\star}(z), z)
\leq P(\mathbf{0}, z) = f(y(z), \mathbf{0}) \enspace.
\end{align*} 
Here, $\mathbf{0}$ is the vector of $0$'s, the first step comes from the definition of $P$, and the second inequality comes from the fact that $\beta^{\star}(z)$ is a minimizer of $P$.
Naturally, we then have that the $\ell_1$ norm of the the solving weights $\beta^{\star}(z)$ is bounded by the value of $f(y(z), \mathbf{0})$. Any solution $\beta^{\star}(z)$ is inside the $\ell_1$ ball centered at $\mathbf{0}$ with radius $R/\lambda$. 

Since the path is truncated \ie $z \in [z_{\min}, z_{\max}]$, then the solution path is bounded \ie
$$ \{ \beta^{\star}(z) : z \in [z_{\min}, z_{\max}] \} \subset \left\{ \beta : \norm{\beta}_1 \leq \frac{R}{\lambda} \right\}  \enspace,$$

where  
$$ R= \sup_{z \in [z_{\min}, z_{\max}]} f(y(z), \mathbf{0}) \enspace.$$
\end{proof}

Also, it is easy to see that, along the path
\begin{align*}
\norm{y(z)}_2 &\leq \max(\norm{y(z_{\min})}_2, \norm{y(z_{\max})}_2) \\
\norm{y^\star(z)}_2 &= \norm{X_A \beta_A^\star}_2 \leq  \frac{\sigma_{\max}(X_A) \times R}{\lambda} \\
\norm{\zeta(z)}_2 &= \sqrt{\|y(z)\|_2^2 + \|y^*(z)\|_2^2} \\
&\leq \sqrt{ \max(\norm{y(z_{\min})}_2, \norm{y(z_{\max})}_2)^2 + \left(\frac{\sigma_{\max}(X_A) \times R}{\lambda}\right)^2 } =: B \enspace.
\end{align*}
Note that, for simplicity, we naturally suppose that the estimate $\hat \beta(z)$ is a better minimizer than the vector $\mathbf{0}$. Thus, the same bounds above hold for $\hat\beta(z)$, $\hat y(z)$ and $\hat \zeta(z)$.

\section{Choice of the Range $[z_{\min}, z_{\max}]$}
\begin{restatable}[]{lem}{justified}
\label{lem:justified}
Choosing $z_0 = z_{\max} = \max(y_1, \ldots, y_n)$ and stopping once $z_t \leq z_{\min} = \min(y_1, \ldots y_n)$ reduces the probability of coverage by at most $\frac{2}{n+1}$.
\end{restatable}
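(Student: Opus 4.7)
}
My plan is to decompose the coverage event into two parts: the event that $y_{n+1}$ falls inside the truncation interval, in which case the restricted conformal set behaves exactly like the full one, and the complementary event, which I bound via a direct exchangeability argument on ranks. Let $\mathcal{C}(x_{n+1})$ denote the full conformal prediction set from \Cref{eq:conformal_confidence} and let $\hat{\mathcal{C}}(x_{n+1}) = \mathcal{C}(x_{n+1}) \cap [z_{\min}, z_{\max}]$ denote the truncated set actually returned by the algorithm.

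\paragraph{Step 1: reduce to a tail event.}
First I would write
\[
\mathbb{P}\bigl(y_{n+1} \in \hat{\mathcal{C}}(x_{n+1})\bigr) \;\geq\; \mathbb{P}\bigl(y_{n+1} \in \mathcal{C}(x_{n+1})\bigr) \;-\; \mathbb{P}\bigl(y_{n+1} \notin [z_{\min}, z_{\max}]\bigr),
\]
using the fact that on the event $\{y_{n+1} \in [z_{\min}, z_{\max}]\}$ the two sets agree at $y_{n+1}$. The first term on the right is at least $1 - \alpha$ by the standard conformal coverage guarantee, so the task reduces to controlling the tail $\mathbb{P}(y_{n+1} \notin [z_{\min}, z_{\max}])$.

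\paragraph{Step 2: exchangeability on ranks of the $y_i$'s.}
By the exchangeability assumption, the rank of $y_{n+1}$ among $\{y_1, \ldots, y_{n+1}\}$ is (sub)uniform on $\{1, \ldots, n+1\}$. In particular,
\[
\mathbb{P}\bigl(y_{n+1} > \max(y_1, \ldots, y_n)\bigr) \;=\; \mathbb{P}\bigl(\texttt{Rank}(y_{n+1}) = n+1\bigr) \;\leq\; \tfrac{1}{n+1},
\]
and symmetrically $\mathbb{P}(y_{n+1} < \min(y_1, \ldots, y_n)) \leq \tfrac{1}{n+1}$ (with equality in the almost-surely-distinct case, and otherwise $\leq$ after tie-breaking). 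A union bound then gives $\mathbb{P}(y_{n+1} \notin [z_{\min}, z_{\max}]) \leq \tfrac{2}{n+1}$.

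\paragraph{Step 3: combine.}
Plugging this into the bound from Step 1 yields
\[
\mathbb{P}\bigl(y_{n+1} \in \hat{\mathcal{C}}(x_{n+1})\bigr) \;\geq\; (1 - \alpha) - \tfrac{2}{n+1},
\]
which is exactly the claimed $2/(n+1)$ loss in coverage. The only subtlety I anticipate is handling ties in the ranks of the $y_i$'s, but since any deterministic tie-breaking rule only decreases the probability of attaining the extreme ranks, the $\tfrac{1}{n+1}$ bound in each direction still holds. The rest is just elementary probability, so no substantive obstacle is expected.
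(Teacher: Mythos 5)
Your proof is correct and follows essentially the same route as the paper: exchangeability gives $\mathbb{P}(y_{n+1} > z_{\max}) \leq \frac{1}{n+1}$ and $\mathbb{P}(y_{n+1} < z_{\min}) \leq \frac{1}{n+1}$, and a union bound finishes. Your Step 1 (the reduction $\mathbb{P}(y_{n+1}\in\hat{\mathcal{C}}) \geq \mathbb{P}(y_{n+1}\in\mathcal{C}) - \mathbb{P}(y_{n+1}\notin[z_{\min},z_{\max}])$) is actually left implicit in the paper's two-line argument, so your write-up is a slightly more careful version of the same proof.
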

\begin{proof}
    Given our exchangability assumption, the probability that $y_{n+1} \geq z_{\max}$ is at most $\frac{1}{n+1}$. Similarly, the probability that $y_{n+1} \leq z_{\min}$ is at most $\frac{1}{n+1}$.  Therefore, using the union bound, the probability that choosing the criteria we do in our algorithm affects coverage by at most $\frac{2}{n+1}$, which becomes negligible as $n$ grows.
\end{proof}
\section{Details on $\partial_2 f$}
\label{sec:detailspartf}
In the main text, we mentioned that we are approximating $\partial_2 f(y(z^+), y^\star(z^+))$. We can do this via linearization.

$$
\partial_2 f \circ \zeta(z^+) \approx \partial_2 f \circ \zeta(z) + \partial_{2, 1} f \circ \zeta(z) (y(z^+) - y(z)) + \partial_{2, 2} f \circ \zeta(z)(y^\star(z^+) - y^\star(z)) 
$$
Moreover,
\begin{align*}
y(z^+) - y(z) &= (0, \ldots, 0, z^+ - z) \\
y^\star(z^+) - y^\star(z) &= X_A (\beta_A^\star(z^+) - \beta_A^\star(z)) \approx X_A \frac{\partial \beta_A^\star}{\partial z}(z) \times (z^+ - z)
\end{align*}

Finally, with a plug-in approach, we approximate $\hat\beta_A^\prime(z) \approx \frac{\partial \beta_A^\star}{\partial z}(z)$ and obtain

$$
\partial_2 f \circ \zeta(z^+) \approx \partial_2 f \circ \zeta(z) + \bigg( [\partial_{2,1} f \circ \zeta(z)]_{n+1} + \left[\partial_{2, 2} f \circ \zeta(z)\right] X_A \hat\beta_A^\prime(z) \bigg) \times (z^+ - z)
$$

Here, the first equality come from definition, the second is from applying the chain rule to intermediate variables, the third is from a simple notational switch, and the final equality comes from using our estimators for $\beta^\star$. Now, we can find the roots of $\mathcal{I}$ from \Cref{eq:invariable} as the following
$$z_{j, t+1}^{\rm{in}} = z_t +  \frac{-X_{j}^{\top} \partial_2 f \circ \hat{\zeta}(z_t) \pm \lambda}{X_{j}^{\top}\left[[\partial_{2,1} f\circ\hat\zeta(z_t)]_{n+1} + \partial_{2,2}f\circ\hat\zeta(z_t)^\top X_A \hat\beta_A^\prime(z_t)\right]} \enspace.$$

We can now present our desired theorem.

\begin{restatable}{lem}{boundgradient}\label{lm:bound_gradient}
    The gradient of the solution path $\frac{\partial \beta^\star}{\partial z}(z)$, as well as its estimates $\hat\beta^\prime(z)$ are bounded as follow
    $$
\max\left( \norm{\frac{\partial \beta^\star}{\partial z}(z)}, \norm{\hat\beta^\prime(z)}  \right) \leq \frac{\sigma_{\max}(X_{A(z)})}{\sigma_{\min}^2(X_{A(z)})} \times \frac{\nu_f}{\mu_f} \enspace.
    $$
\end{restatable}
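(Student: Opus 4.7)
The natural approach is to start from the closed-form expression for $\frac{\partial \beta^\star_A}{\partial z}$ derived via the implicit function theorem in \Cref{sec:compbetaz}, namely
\[
\frac{\partial \beta^\star_A}{\partial z}(z) \;=\; -\Bigl(\tfrac{\partial H}{\partial \beta}\Bigr)^{-1} \tfrac{\partial H}{\partial y}\, \tfrac{\partial y}{\partial z},
\]
with $\tfrac{\partial H}{\partial \beta} = X_A^\top \partial_{2,2} f\circ\zeta(z)\, X_A$, $\tfrac{\partial H}{\partial y} = X_A^\top \partial_{2,1} f\circ\zeta(z)$, and $\tfrac{\partial y}{\partial z} = e_{n+1}$. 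Then I would bound the operator norm of the product by the product of operator norms and invoke the two preceding lemmas.

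\textbf{Key steps.} First, by \Cref{lm:strong_convexity_along_path}, the inverse Hessian block is controlled as $\norm{(\partial H/\partial \beta)^{-1}}_2 \leq 1/(\sigma_{\min}^2(X_A)\mu_f)$. Second, since $e_{n+1}$ has unit norm and $\partial_{2,1}f\circ\zeta(z)$ has spectral norm at most $\nu_f$ by \Cref{lm:bound_derivatives}, we get $\norm{(\partial H/\partial y)\,e_{n+1}}_2 \le \norm{X_A^\top}_2\,\norm{\partial_{2,1} f\circ\zeta(z)\, e_{n+1}}_2 \leq \sigma_{\max}(X_A)\,\nu_f$. Composing these two estimates yields
\[
\norm{\tfrac{\partial \beta^\star}{\partial z}(z)}_2 \;\leq\; \frac{\sigma_{\max}(X_{A(z)})}{\sigma_{\min}^2(X_{A(z)})}\cdot \frac{\nu_f}{\mu_f}.
\]
Third, note that $\hat\beta^\prime(z)$ is defined by exactly the same formula with $\zeta(z) = (y(z),y^\star(z))$ replaced by $\hat\zeta(z) = (y(z),\hat y(z))$; so the same two-step bound applies verbatim, provided the two lemmas remain valid at $\hat\zeta(z)$.

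\textbf{Main obstacle.} The only subtlety is that \Cref{lm:bound_derivatives} and \Cref{lm:strong_convexity_along_path} are stated as uniform bounds on $\norm{\zeta(z)}_2 \le B$, with $B$ the radius derived in the remark following \Cref{lem:compact}. I would argue that $\hat y(z) = X\hat\beta(z)$ obeys the same $\ell_1$-norm control as $y^\star(z)$: since the corrector produces iterates with objective value no larger than that at $\mathbf 0$ (up to the negligible $\epsilon_{\mathrm{tol}}$), the bound $\lambda\norm{\hat\beta(z)}_1 \le R$ carries over, hence $\hat\zeta(z)$ lies in the same ball of radius $B$. This legitimises applying $\nu_f$ and $\mu_f$ at $\hat\zeta$ as well, and the same chain of inequalities closes the bound for $\hat\beta^\prime(z)$. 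Taking the max of the two identical right-hand sides completes the proof.
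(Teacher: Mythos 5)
Your proof is correct and takes essentially the same route as the paper: bound the implicit-function-theorem expression by $\|(\partial H/\partial \beta)^{-1}\|_2 \cdot \|\partial H/\partial y\|_2$, invoke \Cref{lm:strong_convexity_along_path} for the inverse Hessian factor and \Cref{lm:bound_derivatives} for the gradient factor, and note that the same chain applies to $\hat\beta^\prime(z)$. Your remark on keeping $\hat\zeta(z)$ inside the radius-$B$ ball is precisely the paper's assumption (stated after \Cref{lem:compact}) that the corrector output is a better minimizer than $\mathbf{0}$, so the bounds $\nu_f, \mu_f$ apply at $\hat\zeta$ as well.
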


\begin{proof}
We remind that $\frac{\partial y}{\partial z} = (0, \ldots, 0, 1)^{\top}$ and
\begin{align*}
    &\frac{\partial \beta_{A}^{\star}}{\partial z} = -\left(\frac{\partial H}{\partial \beta}\right)^{-1}\frac{\partial H}{\partial y}\frac{\partial y}{\partial z} 
    & \hat\beta_{A}^{\prime} = -\left(\widehat{\frac{\partial H}{\partial \beta}}\right)^{-1}\widehat{\frac{\partial H}{\partial y}}\frac{\partial y}{\partial z} \\
    &\frac{\partial H}{\partial \beta} = X_A^\top\partial_{2, 2} f \circ \zeta(z)X_A 
    &\widehat{\frac{\partial H}{\partial \beta}} = X_A^\top\partial_{2, 2} f\circ \hat \zeta(z)X_A \\
   &\frac{\partial H}{\partial y} = X_A^\top\partial_{2,1} f \circ \zeta(z)
   &\widehat{\frac{\partial H}{\partial y}} = X_A^\top\partial_{2,1} f\circ \hat \zeta(z)
\end{align*}
Hence
$$
\norm{\frac{\partial \beta_{A}^{\star}}{\partial z}}_2 \leq \norm{\left(\frac{\partial H}{\partial \beta}\right)^{-1}}_2\norm{\frac{\partial H}{\partial y}}_2
$$

By definition, we have that for any $z \in [z_{\min}, z_{\max}]$, 
   $$\norm{\frac{\partial H}{\partial \beta} (y(z), \beta_A^{\star}(z))}_2 = \norm{X_A^\top\partial_{2, 2} f\circ\zeta(z) X_A}_2 \geq \sigma_{\min}(X_A^\top X_A) \times  \inf_{\norm{\zeta} \leq B} \sigma_{\min}(\partial_{2, 2}f \circ \zeta(z) )\enspace. $$
Since $f$ is assumed to be $\mu_f$-strongly convex from \Cref{lm:strong_convexity_along_path}, it holds
$$\inf_{\norm{\zeta} \leq B} \sigma_{\min}(\partial_{2, 2}f \circ \zeta(z) ) \geq \mu_f > 0 \enspace,$$
and then
\begin{align*}
\norm{\left(\frac{\partial H}{\partial \beta}\right)^{-1}}_2 &\leq \frac{1}{\sigma_{\min}^2(X_A) \times \mu_f} \enspace.
\end{align*}

Similarly, given $f$ is smooth with constant $\nu_f$ from \Cref{lm:bound_derivatives}, we have 
\begin{align*}
\norm{\frac{\partial H}{\partial y} (y(z), \beta_A^{\star}(z)) }_2 &= \norm{X_A^\top\partial_{2,1} f \circ \zeta(z) }_2 
\leq \sigma_{\max}(X_A) \norm{\partial_{2,1} f \circ \zeta(z) }_2 \leq \sigma_{\max}(X_A) \times \nu_f \enspace.
\end{align*}
Hence the result.
The proof for upper-bounding the estimated gradient norm follows the same line.
\end{proof}

\wholeerror*
\begin{proof}
To analyze $\|\tilde{\beta}(z^{+}) - \beta^\star(z^{+})\|_2$, we will use the definition from our algorithm that 
    
    $$\tilde{\beta}(z^{+}) = \hat \beta(z_t) + \hat \beta^\prime(z_t) (z^{+} - z_t)\text{.}$$ 
    Here, $z_t$ is the last point at which we ran our primal corrector. 
    Using this, we can decompose the error as the follows:
    \begin{align*}
        \left\|\tilde{\beta}(z^{+}) - \beta^\star(z^{+})\right\|_2 &= \left\|\hat\beta(z_t) + \hat \beta^\prime(z_t) (z^{+} - z_t) - \beta^\star(z^{+}) \right\|_2 \\
        &= \left\|\hat\beta(z_t) + \hat \beta^\prime(z_t) (z^{+} - z_t) - \beta^\star(z^{+}) + \beta^\star(z_t) - \beta^\star(z_t) \right\|_2 \\
        &= \left\|\hat\beta(z_t) - \beta^\star(z_t) +  \int_{z_t}^{z^{+}} \left[\hat \beta^\prime(z_t) - \frac{\partial \beta^{\star}(z)}{\partial z}\right] dz \right\|_2 \\
        &\leq \norm{\hat\beta(z_t) - \beta^\star(z_t)}_2 + \sup_{z \in [z^+, z_t]} \norm{ \hat \beta^\prime(z_t) - \frac{\partial \beta^{\star}(z)}{\partial z} }_2 |z^+ - z_t| \\
        %
        %
    \end{align*}
Here, the third equality comes from the fact that $\beta^*(z^+)- \beta^*(z_t) = \int_{z_t}^{z^{+}} \frac{\partial \beta^{\star}(z)}{\partial z} dz $. 

Now, from the Triangular Inequality, we have
\begin{align*}
\sup_{z \in [z^+, z_t]} \norm{ \hat \beta^\prime(z_t) - \frac{\partial \beta^{\star}(z)}{\partial z} }_2 &\leq  \normin{\hat \beta^\prime(z_t)} + \sup_{z \in [z^+, z_t]} \norm{\frac{\partial \beta^{\star}(z)}{\partial z} }_2 \\
&\leq \left[ \frac{\sigma_{\max}(X_{A(z_t)})}{\sigma_{\min}^2(X_{A(z_t)})} + 
\sup_{z \in [z^+, z_t]} \frac{\sigma_{\max}(X_{A(z)})}{\sigma_{\min}^2(X_{A(z)})} \right] \times \frac{\nu_f}{\mu_f} 
\end{align*}
Here, the second inequality comes from \Cref{lm:bound_gradient}.

Now, if point $z^+$ is such that $z^+ \in (z_{t+1}, z_t]$, the active sets at point $z^+$ and $z_t$ are constant. Then, we can simplify. 
$$
\norm{\tilde{\beta}(z^{+}) - \beta^\star(z^{+})}_2 \leq \epsilon_{\rm{tol}} +  \frac{2 \,\sigma_{\max}(X_{A(z_t)})}{\sigma_{\min}^2(X_{A(z_t)})} \times \frac{\nu_f}{\mu_f} \times |z^+ - z_t| \enspace.
$$

Note that if the candidate $z^+ = z_t$ is exactly a kink, the right-most term is zero. It only remains the corrector error. If it is not the case that $z^+$ and $z_t$ have the same active set, we have 
$$
\norm{\tilde{\beta}(z^{+}) - \beta^\star(z^{+})}_2 \leq \epsilon_{\rm{tol}} +  L \times \frac{\nu_f}{\mu_f} \times |z^+ - z_t| \enspace
$$
where
\begin{equation}
L :=
\left[\frac{\sigma_{\max}(X_{A(z_t)})}{\sigma_{\min}^2(X_{A(z_t)})} + 
\sup_{z \in [z^+, z_t]} \frac{\sigma_{\max}(X_{A(z)})}{\sigma_{\min}^2(X_{A(z)})} \right]
\end{equation}
\end{proof}

\uppboundg*

\begin{proof}
Let us define, for $t \in [0, 1]$, the function
$$ \phi(t) = \partial_2 f(\hat \zeta(z^+) + t(\zeta(z^+) -  \hat \zeta(z^+))) \enspace.$$
We have from the fundamental theorem of calculus, $\phi(1) - \phi(0) = \int_{0}^{1} \frac{\partial \phi(t)}{\partial t} dt$ where
$$\phi(1) - \phi(0) = \partial_2 f(\zeta(z^+)) - \partial_2 f(\hat \zeta(z^+))$$
and
$$\frac{\partial \phi(t)}{\partial t} = \partial_{2,1} f(\hat \zeta(z^+) + t(\zeta(z^+) -  \hat \zeta(z^+)))^\top[\zeta(z^+) -  \hat \zeta(z^+)]_1 + 
\partial_{2,2} f(\hat \zeta(z^+) + t(\zeta(z^+) -  \hat \zeta(z^+)))^\top[\zeta(z^+) -  \hat \zeta(z^+)]_2 \enspace.$$
We remind the reader that, by definition, we have 
\begin{align*}
    [\zeta(z^+) -  \hat \zeta(z^+)]_1 &= y(z^+) - y(z^+) = \textbf{0} \\
    [\zeta(z^+) -  \hat \zeta(z^+)]_2 &= y^\star(z^+) - \hat y(z^+)
\end{align*}
and deduce that
\begin{align*}
\normin{\partial_2 f \circ \zeta(z^+) - \partial_2 f \circ \hat \zeta(z^+)}_2 &= \|\phi(1) - \phi(0)\|_2\\
&= \left\|\int_{0}^{1} \frac{\partial \phi(t)}{\partial t} dt\right\|_2\\
&=\left\| \int_{0}^{1} \partial_{2,2} f(\hat \zeta(z^+) + t(\zeta(z^+) -  \hat \zeta(z^+)))^\top[\zeta(z^+) -  \hat \zeta(z^+)]_2 dt \right\|_2\\
&\leq \sup_{t \in [0, 1]} \normin{\partial_{2,2} f(\hat \zeta(z^+) + t(\zeta(z^+) -  \hat \zeta(z^+)))}_2 \normin{y^\star(z^+) - \hat y(z^+)}_2\\
&\leq \nu_f \times \normin{X_A \beta_A^\star(z^+) -  X_A \hat \beta_A(z^+)}_2 \\
&\leq \nu_f \times \sigma_{\max}(X_A) \times \normin{ \beta_A^\star(z^+) - \hat \beta_A(z^+)}_2 \\
&\leq \nu_f \times \sigma_{\max}(X_A) \times \left[ \epsilon_{\rm{tol}} +  L \times \frac{\nu_f}{\mu_f} \times |z^+ - z_t| \right]
\enspace.
\end{align*}
Here, the fourth inequality comes from our \Cref{lm:bound_derivatives} and the final inequality comes from \Cref{thm:wholeerror}.
\end{proof}

\newpage

\section{Partial Linearization as Alternative Methods}

In addition to the approximation algorithm described and analyzed above, we briefly describe a more precise but more costly method in terms of computational time. The key point is to try to capture the non-linearity of the solution-path as the input z varies or similarly when the regularization parameter $\lambda$ varies.

For the sake of simplicity, we describe a solution path that exploits the exact solution at each node. The practical algorithm will be based on a plug-in approach similar to that used above.

In the following, we note (deleting the star notation to avoid clutter)

\begin{equation*}
\beta(z, \lambda) \in \argmin_{\beta \in \mathbb{R}^p} f(y(z),X\beta) + \lambda \norm{\beta}_1 \nonumber \enspace.
\end{equation*}

Let us define $f_z(q) = f(y(z), q)$ and linearly approximate the function $q \mapsto \nabla f_z(q)$ at $q_0$ \ie

\begin{equation}\label{eq:linearizing_nablaf}
 \nabla f_z(q) \approx  \nabla f_z(q_0) +  \nabla^2 f_z(q_0)^\top (q - q_0) 
\end{equation}

We denote $q = X_A\beta_A(z, \lambda)$ and $q_0 = X_A\beta_A(z_0, \lambda)$. From the optimality condition \cref{eq:fermat_equality}, we have

\comment{We only linearize \wrt second variable of $\partial_2 f$ which is independent of $z$}

\begin{align}
    X_{A}^{\top}\nabla f_z(q) &= - \lambda v_A(z, \lambda) \nonumber\\
    X_{A}^{\top} \bigg( \nabla f_z(q_0) +  \nabla^2 f_z(q_0)^\top (q - q_0) \bigg) &\overset{\eqref{eq:linearizing_nablaf}}{\approx} - \lambda v_A(z, \lambda) \nonumber\\
    X_{A}^{\top} \nabla^2 f_z(q_0)^\top q &\approx X_{A}^{\top} \bigg(\nabla^2 f_z(q_0)^\top q_0 - \nabla f_z(q_0) \bigg) - \lambda v_A(z, \lambda)\nonumber
\end{align}

Hence
\begin{equation}\label{eq:complex_primal}
    \beta_A(z, \lambda) \approx  \bigg(X_{A}^{\top} \nabla^2 f_z(q_0)^\top X_A\bigg)^{-1} \bigg( X_{A}^{\top} \big(\nabla^2 f_z(q_0)^\top q_0 - \nabla f_z(q_0) \big) - \lambda v_A(z, \lambda) \bigg) 
\end{equation}

We recover exactly the Lasso formula when $f$ is quadratic and also we only need to know $v$ and not the dual variable.

\paragraph{Path \wrt $\lambda$ ($z$ is fixed).}

We have the two situations for a change in the active set:

\begin{itemize}
\item A nonzero variable becomes zero \ie

$$\exists j \in A(z, \lambda) \text{ such that }: \beta_j(z, \lambda) \neq 0 \text{ and } \beta_j(z, \lambda_{j, \rm{out}}) = 0 \enspace.$$

\item A zero variable becomes nonzero \ie

$$ \exists j \in A^c(z, \lambda) \text{ such that }: |X_{j}^{\top}\nabla f_z(X\beta(z, \lambda_{j, \rm{in}}))| = \lambda_{j, \rm{in}} \enspace.$$
\end{itemize}

Then 
\begin{equation}\label{eq:next_lambda}
    \lambda_{\rm{next}} = \max\left(\max_{j \in A(z, \lambda)} \lambda_{j, \rm{out}},\; \max_{j \in A^c(z, \lambda)} \lambda_{j, \rm{in}} \right)  \enspace.
\end{equation}

\comment{We can obtain approximate kinks in the lmd path by closed form solution and no need to invert multiple times.}

\paragraph{Path \wrt $z$ ($\lambda$ is fixed).}

We have the two situations for a change in the active set:

\begin{itemize}
\item A non-zero variable becomes zero \ie
\begin{equation} \label{eq:zfixed_z_to_nz}
    \exists j \in A(z, \lambda) \text{ such that }: \beta_j(z, \lambda) \neq 0 \text{ and } \beta_j(z_{j, \rm{out}}, \lambda) = 0 \enspace. 
\end{equation}
\item A zero variable becomes nonzero \ie

\begin{equation} \label{eq:zfixed_nz_to_z}
    \exists j \in A^c(z, \lambda) \text{ such that }: |X_{j}^{\top}\nabla f_{z_{j, \rm{in}}}(X\beta(z_{j, \rm{in}}, \lambda))| = \lambda \enspace.
\end{equation}
\end{itemize}

Then 
\begin{equation}\label{eq:next_z}
    z_{\rm{next}} = \max\left(\max_{j \in A(z, \lambda)} z_{j, \rm{out}},\; \max_{j \in A^c(z, \lambda)} z_{j, \rm{in}} \right) \enspace. 
\end{equation}

The core drawbacks is that \cref{eq:complex_primal} is non-linear in $z$ which makes the kink finder more complicated. Hence we need to use a root-finding (\eg bisection search) algorithm to estimate accurately the root. This require re-computing both 
$\bigg(X_{A}^{\top} \nabla^2 f_z(q_0)^\top X_A\bigg)^{-1}$ and $\nabla f_z(q_0)$ at every trial value $z$ \ie a dozen number of times which can be expensive.

\comment{To overcome this issue, we propose to linearize both in the first and second variable}

We remind that $\nabla f_z(q) = \partial_2 f(y(z), q)$; both notation will be used for simplicity or clarity.

Using a first order approximation, we have
\begin{equation}\label{eq:etash_linearization}
    \partial_2 f(y(z), q) \approx \partial_2 f(y(z_0), q_0) + 
    \partial_{2\, 1} f(y(z_0), q_0)^\top (y(z) - y(z_0)) +
    \partial_{2\, 2} f(y(z_0), q_0)^\top (q - q_0) 
\end{equation}

Using back the compact notation, we have $\partial_{2\, 2} f(y(z_0), q_0) = \nabla^2 f_{z_0}( q_0)$ and $\partial_2 f(y(z_0), q_0) = \nabla f_{z_0}(q_0)$.

Also, we have $y(z) - y(z_0) = (0, \ldots, 0, z - z_0)$, which implies that
$$ \partial_{2\, 1} f(y(z_0), q_0)^\top (y(z) - y(z_0)) = \tilde \partial_{n+1} f(z_0) (z - z_0) $$

where we denoted $\tilde \partial_{n+1} f(z_0)$ the last coordinate of $\partial_{2\, 1} f(y(z_0), q_0)$.

Finally, we can plug the linear approximation into the optimality condition and obtain
\begin{align}
    - \lambda v_A(z, \lambda)  &= X_{A}^{\top}\nabla f_z(q) \nonumber\\
     - \lambda v_A(z, \lambda) &\overset{\eqref{eq:etash_linearization}}{\approx}
    X_{A}^{\top} \bigg( \nabla f_{z_0}(q_0) + 
    \tilde \partial_{n+1} f(z_0) (z - z_0)  +
    \nabla^2 f_{z_0}( q_0)^\top (q - q_0) \bigg)  \nonumber\\
    X_{A}^{\top} \nabla^2 f_{z_0}(q_0)^\top q &\approx X_{A}^{\top} \bigg(\nabla^2 f_{z_0}(q_0)^\top q_0 - \nabla f_{z_0}(q_0) - \tilde \partial_{n+1} f(z_0) (z - z_0) \bigg) - \lambda v_A(z, \lambda) \nonumber
\end{align}

Hence
\begin{equation}\label{eq:etash_primal}
    \beta_A(z, \lambda) \approx  \bigg(X_{A}^{\top} \nabla^2 f_{z_0}(q_0)^\top X_A\bigg)^{-1} \bigg( X_{A}^{\top} \big(\nabla^2 f_{z_0}(q_0)^\top q_0 - \nabla f_{z_0}(q_0) - \tilde \partial_{n+1} f(z_0) (z - z_0) \big] - \lambda v_A(z, \lambda) \bigg) 
\end{equation}

Now the \Cref{eq:etash_primal} is linear both in $z$ and $\lambda$ but cheap to compute whereas \cref{eq:complex_primal} capture the non linearity in $z$.

The point of this section was to show that several more or less precise approximations can be easily constructed, and they lead to different properties. For example, for optimization purposes, it is more interesting, but unfortunately more costly, to capture the non-linearity of the solution path as much as possible. We haven't taken this option in this article, as we've observed in the examples we've tested that prediction accuracy is more important than estimation (of the optimal solution) accuracy when it comes to calculating conformal prediction sets. 

Another interesting approach could be to adopt paths based on checking the support of the optimal solution \citep{Ndiaye_Takeuchi22} when the input data of the $z$ or $\lambda$ of the problem changes. Among other things, this ensures that the active sets used always contain the optimum active set at all points.

\end{document}